\documentclass{article}

\PassOptionsToPackage{numbers}{natbib} % plain.bst is numeric; avoids natbib author-year error
\usepackage[main, final]{neurips_2026}
\usepackage{silence}
\usepackage[utf8]{inputenc} % allow utf-8 input
\usepackage[T1]{fontenc}    % use 8-bit T1 fonts
\usepackage{url}            % simple URL typesetting
\usepackage{booktabs}       % professional-quality tables
\usepackage{amsfonts}       % blackboard math symbols
\usepackage{nicefrac}       % compact symbols for 1/2, etc.
\usepackage{microtype}      % microtypography
\usepackage[table]{xcolor}  % colors and colored table cells

\usepackage{graphicx}
\usepackage{calc} 
\usepackage{subfigure}
\usepackage{multirow}
\usepackage{enumitem}
\usepackage{tikz}
\usepackage{amsmath}
\usepackage{amssymb}
\usepackage{mathtools}
\usepackage{amsthm}
\usepackage{verbatim}
\usepackage{wrapfig}
\usepackage{algorithm}
\usepackage{algorithmic}
\usepackage{placeins}  % 프리앰블에 추가
\usepackage{makecell}
\usepackage{adjustbox}
\usepackage{capt-of}   % \captionof{table} inside the Figure 3 float
\usepackage[hidelinks]{hyperref}
\theoremstyle{plain}
\newtheorem{theorem}{Theorem}[section]
\newtheorem{proposition}[theorem]{Proposition}

\theoremstyle{definition}

\theoremstyle{remark}

\newcommand{\bc}{\begin{center}}
	\newcommand{\ec}{\end{center}}
\newcommand{\be}{\begin{equation}}
	\newcommand{\ee}{\end{equation}}
\newcommand{\ba}{\begin{array}}
	\newcommand{\ea}{\end{array}}
\newcommand{\bean}{\setlength\arraycolsep{1pt}\begin{eqnarray*}}
	\newcommand{\eean}{\end{eqnarray*}}
\newcommand{\bea}{\setlength\arraycolsep{1pt}\begin{eqnarray}}
	\newcommand{\eea}{\end{eqnarray}}
\newcommand{\ben}{\begin{enumerate}}
	\newcommand{\een}{\end{enumerate}}
\newcommand{\bed}{\begin{itemize}}
	\newcommand{\eed}{\end{itemize}}

\DeclareMathOperator*{\argmin}{argmin}

\title{Optimal Transport Reweighting for Robust Learning under Spurious Correlations and Label Noise}

\author{%
  Sung Ho Jo \quad Seonghwi Kim \quad Wonsang Yun \quad
  Minwoo Chae\thanks{Corresponding author.} \\
  Pohang University of Science and Technology (POSTECH) \\
  \texttt{\{tjdgh1813, kshwi, wonsang, mchae\}@postech.ac.kr} \\
}

\begin{document}

\maketitle

\begin{abstract}
Machine learning models often suffer performance degradation under subpopulation shift, particularly when spurious correlations cause models to rely on shortcut features that fail to generalize across subgroups. A recent line of work mitigates this issue by using loss-based signals to identify informative samples, but these signals can become severely distorted under label noise: mislabeled samples may also incur large losses and contaminate subsequent reweighting or retraining. Despite its practical importance, this intersection remains largely underexplored. We propose POTER, a reweighting framework based on optimal transport that derives sample importance from the transport geometry between the training distribution and a reference distribution constructed from limited validation group annotations. By measuring alignment at the individual-sample level rather than relying on loss, POTER downweights mislabeled or strongly bias-aligned samples while assigning higher importance to samples better aligned with the reference distribution. In addition, POTER requires only a single ERM training stage, moving beyond the retraining paradigm common in recent work. Across standard benchmarks and noisy-label settings, POTER achieves state-of-the-art worst-group accuracy, including cases where label corruption is concentrated within minority subgroups.
\end{abstract}

\section{Introduction}

Empirical risk minimization (ERM) assumes that training and test data are drawn from the same distribution, and has achieved strong average performance when this assumption holds. In practice, however, models are often deployed under distribution shifts. A particularly challenging instance is subpopulation shift, in which the proportions of subpopulations differ between training and test distributions. This becomes especially problematic when certain attributes are spuriously correlated with the target label in the training data, causing models to rely on shortcut features that fail to generalize across subpopulations \cite{sagawa2019distributionally}. For example, in the Waterbirds dataset, bird labels are strongly correlated with backgrounds, leading models to exploit background cues rather than bird-specific features. Performance consequently deteriorates on minority subgroups where such correlations no longer hold—a vulnerability with serious implications in high-stakes domains such as healthcare and fairness-sensitive decision-making \cite{obermeyer2019dissecting, banerjee2023shortcuts, hashimoto2018fairness}.

To mitigate spurious correlations, many existing methods rely on group annotations to identify underperforming subpopulations and promote subgroup robustness during training \cite{sagawa2019distributionally, kirichenkolast, jo2025mitigating}. In realistic settings, however, group annotations for the full training set are often unavailable, with such information available only for a limited validation set. To address this limitation, a prominent line of work relies on sample-wise proxy signals—such as per-sample loss or misclassification indicators—from a preliminary model to identify minority or bias-conflicting samples, which are then upweighted or used in last-layer retraining pipelines \cite{liu2021just, qiu2023simple, stromberg2024robustness, labonte2024towards}. Bias-conflicting samples refer to those that do not conform to the dominant spurious correlation in the training data (e.g., waterbirds on land backgrounds in the Waterbirds dataset). The central weakness of this line of work is that it hinges on high-loss samples being informative. That premise is fragile even in clean settings, since loss-based signals depend on the learned model and optimization trajectory, making the selected samples inconsistent across training procedures.

Under label noise, this fragility becomes especially severe. Mislabeled samples also tend to incur large training losses, so high-loss-based criteria systematically select them along with genuinely informative bias-conflicting examples. As a result, the selected subset becomes contaminated, and the subsequent reweighting or retraining step can no longer reliably emphasize bias-conflicting samples. Despite its practical importance, the intersection of spurious correlations and label noise remains largely overlooked. One recent approach is \cite{NEURIPS2024_fba9133d}, which proposes label propagation prior to last-layer retraining as a remedy and attains strong subpopulation robustness under symmetric label noise, where each training label is independently flipped to the other class with the same probability. However, this robustness is established when the model used in the subsequent reweighting or retraining stage is trained on clean data, and the gains become substantially smaller once label noise affects the full training pipeline. Moreover, it degrades further when label noise is concentrated within minority subgroups—a setting we term \textit{subgroup-concentrated label noise}—exposing a previously overlooked failure mode.

To address the fragility of proxy-based sample selection, we introduce POTER (Potentials from Optimal Transport for Empirical Reweighting), an OT-based framework that constructs a reweighted empirical training distribution using only limited group annotations from a validation set. Rather than identifying informative samples using a preliminary model and then upweighting or retraining on the selected subset, our method derives sample importance directly from the OT geometry between the training distribution and a constructed reference distribution. This removes optimization-dependent sample selection entirely and does not rely on a clean-data-trained preliminary model. Instead, sample importance is determined by alignment with the reference distribution, so examples that are poorly aligned with the target distribution—including those affected by label noise—are assigned lower importance. To our knowledge, this is the first OT-based reweighting framework for spurious correlation mitigation. More fundamentally, the proposed framework moves beyond the retraining-based paradigm that has largely dominated this field: unlike existing approaches that rely on full-model or last-layer retraining, it requires only a single standard ERM training stage, with OT dual potentials computed within seconds even on the largest benchmark we consider. Despite this simplicity, POTER achieves state-of-the-art performance on standard spurious correlation benchmarks and substantially outperforms prior methods under both symmetric and subgroup-concentrated label noise, without sacrificing clean-setting performance.

Our main contributions are as follows:

\begin{itemize}[leftmargin=*, itemsep=0.5em]
	\item We propose POTER, a novel OT-based reweighting framework for spurious correlation mitigation, deriving sample importance from OT dual potentials using only limited validation-set group annotations, without proxy-based sample selection or preliminary downstream-task training.
	\item We advance beyond recently established last-layer retraining pipelines, requiring only a single standard ERM training stage with OT computation adding negligible overhead.
    \item We demonstrate state-of-the-art worst-group accuracy on standard benchmarks and substantially outperform existing methods under symmetric label noise across a wide range of noise levels.
    \item We identify subgroup-concentrated label noise as a previously overlooked failure mode, and show that our approach remains robust in this regime, where even a method explicitly designed to address label noise under spurious correlations degrades substantially.
\end{itemize}

\section{Related Work}

\paragraph{Spurious Correlation Mitigation}

A large body of work has studied robust learning under spurious correlations. Early approaches typically assume access to group annotations for the full training set and use them to guide robust learning \cite{arjovsky2019invariant, sagawa2019distributionally, krueger2021out, yao2022improving, deng2024robust}. To reduce reliance on full training-set group labels, later methods use limited supervision to infer pseudo groups or spurious attributes and combine them with existing robust learning methods like Group DRO, but this requires an additional group-inference stage and incurs extra computational cost \cite{nam2022spread, ghosal2023distributionally, hanimproving}. More recently, retraining-based pipelines—especially those based on last-layer retraining—have become a dominant design choice, using proxy signals from a preliminary model to select or reweight samples before retraining or adaptation \cite{liu2021just, qiu2023simple, stromberg2024robustness, labonte2024towards, ghaznavi2025exploiting, gourabathina2026robustness}. Although effective, these methods still depend on additional retraining and on the quality of the preliminary model used to generate such signals. Our work pushes this progression one step further by directly constructing a reweighted empirical training distribution, without requiring any retraining stage. Beyond these established directions, recent work has also begun to examine new failure modes within the spurious correlation framework, including settings with multiple spurious features \cite{paranjapeagro, kim2024improving}, weak spurious signals \cite{mulchandanisevering}, distributional shifts within minority subgroups \cite{jo2025mitigating}, and settings where some subgroups are entirely absent from the training distribution \cite{seonghwi2026}. This broader trend underscores the need to study additional sources of failure beyond standard benchmark assumptions.

\paragraph{Spurious Correlation under Label Noise}

While recent work has increasingly examined diverse failure modes within the spurious correlation framework, label noise—a classical challenge in machine learning—has received comparatively limited attention. More broadly, prior studies \cite{oh2022improving, qiaounderstanding, NEURIPS2024_fba9133d} suggest that many existing spurious correlation methods—especially those relying on loss- or error-based signals—can suffer severe performance degradation under label noise, regardless of whether group information is available.
Among the few works that directly address this intersection, END \cite{oh2022improving} uses predictive uncertainty to identify and oversample samples that are likely to be free of spurious cues, but requires two rounds of full-model training and tends to yield weaker clean-setting performance. More recently, a kNN-based label-correction method \cite{NEURIPS2024_fba9133d} improves last-layer retraining under symmetric label noise. However, its reliance on local neighborhood structure makes it sensitive to subgroup-concentrated label noise. We show that even in noise regimes where this method appears to perform well, worst-group accuracy can deteriorate substantially when label noise is concentrated within minority subgroups, even if the overall noise level remains moderate.

\paragraph{Optimal Transport and Data Valuation}
Optimal transport provides a principled framework for comparing and aligning probability distributions, and has found broad application in machine learning, particularly in robust learning under distribution shift \cite{kuhn2019wasserstein, blanchet2022distributionally, bai2024wasserstein}.
More recently, OT dual potentials have been leveraged as theoretically grounded signals for data valuation \cite{just2023lava}, with subsequent work extending this framework to federated learning settings \cite{li2024data} and scaling it to larger datasets \cite{kessler2024sava}. Our work builds on this perspective, using OT dual potentials not only for data valuation, but also to derive sample importance weights for model learning. To our knowledge, this is the first work to introduce OT-based reweighting for spurious correlation mitigation, enabling us to construct a reweighted empirical training distribution using only limited group annotations from a validation set—without proxy-based sample selection or any retraining stage.

\section{Preliminaries}

\paragraph{Problem Setup.}
\label{sec:preliminaries}
We consider a supervised classification problem with input $X \in \mathcal{X}$, clean label $Y \in \mathcal{Y}$, and subgroup (or subpopulation) index $G \in \mathcal{G}$, where $K:=|\mathcal{Y}|$ denotes the number of classes. Let $P_{\mathrm{train}}$ and $P_{\mathrm{test}}$ denote the training and test distributions over $(X,Y,G)$, respectively.
We assume a subpopulation shift where the marginal distributions of \(G\) under \(P_{\mathrm{train}}\) and \(P_{\mathrm{test}}\) differ, while the conditional distribution of \((X,Y)\) given \(G\) remains invariant. Under subgroup-dependent label noise, we observe a corrupted label $\tilde Y$, instead of $Y$, governed by the conditional probabilities
\[
P_{\mathrm{train}}(\tilde Y=\tilde y \mid Y=y,\,G=g)
=
(1-\eta_g)\mathbf{1}[\tilde y=y]
+
\frac{\eta_g}{K-1}\mathbf{1}[\tilde y\neq y],
\qquad g\in\mathcal G,
\]
where $\eta_g\in[0,1)$ denotes the noise rate for subgroup $g$. Symmetric label noise corresponds to the special case $\eta_g=\eta$ for all $g\in\mathcal G$. We define subgroup-concentrated label noise as the scenario in which corruption is restricted to a specific subset $S\subseteq\mathcal G$ of subgroups. In our experiments, we specifically consider the case where $\eta_g=\eta$ for $g\in S$ and $\eta_g=0$ otherwise.

We are given a training set $D_t=\{(x_i,\tilde y_i)\}_{i=1}^n$ consisting of independent samples from the marginal distribution of $(X,\tilde Y)$. Crucially, the subgroup indices $G$ are latent and remain unobserved throughout training. We are also given a limited validation set $D_v=\{(x_j',y_j',g_j')\}_{j=1}^m$ sampled from $P_{\mathrm{test}}$, with clean labels and subgroup annotations.

Our objective is to learn a classifier that remains robust under subpopulation shift even when the training labels are noisy. We evaluate performance on the test distribution using worst-group accuracy, i.e., the minimum accuracy over all subgroups, which effectively captures robustness to subgroup-proportion shifts, including extreme cases where test mass concentrates on a single subgroup. For the distributional formulation used later, we represent the observed training data and validation data by their empirical distributions, $\mu_t=n^{-1}\sum_{i=1}^n \delta_{(x_i,\tilde y_i)}$ and $\mu_v=m^{-1}\sum_{j=1}^m \delta_{(x_j',y_j')}$, respectively, where $\delta_z$ denotes the Dirac measure at $z$.

\paragraph{Optimal Transport.}
Let $\mathcal{Z}=\mathcal{X}\times\mathcal{Y}$. Given empirical distributions $\mu_t$ and $\mu_v$ and a cost function $c:\mathcal{Z}\times\mathcal{Z}\to\mathbb{R}_+$, we define the optimal transport value between $\mu_t$ and $\mu_v$ as the optimal value of the Kantorovich problem \cite{kantorovich1942translocation}:

\begin{equation}
\mathcal{T}(\mu_t,\mu_v)
:=
\inf_{\pi\in\Pi(\mu_t,\mu_v)}
\int_{\mathcal{Z}\times\mathcal{Z}} c(z,z')\,d\pi(z,z').
\label{eq:ot-primal}
\end{equation}
Here, $\Pi(\mu_t,\mu_v)$ denotes the set of couplings with marginals $\mu_t$ and $\mu_v$. 
To enable efficient computation, we consider the entropy-regularized optimal transport problem and denote its optimal value by
\[
\mathcal{T}_\varepsilon(\mu_t,\mu_v)
=
\inf_{\pi\in\Pi(\mu_t,\mu_v)}
\int_{\mathcal{Z}\times\mathcal{Z}} c(z,z')\,d\pi(z,z')
+
\varepsilon\,\mathrm{KL}\!\left(\pi \,\middle\|\, \mu_t\otimes\mu_v\right),
\]
where $\varepsilon>0$ is a regularization parameter. The corresponding optimization problem can be solved efficiently by the iterative Sinkhorn algorithm \cite{cuturi2013sinkhorn}.

Moreover, under mild assumptions, Kantorovich duality implies that the optimal value of the primal OT problem in \eqref{eq:ot-primal} admits the following dual representation \cite[Theorem 5.10]{villani2009optimal}:

\begin{equation*}
    \mathcal{T}(\mu_t,\mu_v)
    =
    \sup_{\substack{f\in L^1(\mu_t),\, g\in L^1(\mu_v)\\ f(z)+g(z')\le c(z,z')\ \forall z,z'\in\mathcal{Z}}}
    \left\{
    \int_{\mathcal{Z}} f(z)\,d\mu_t(z)+\int_{\mathcal{Z}} g(z)\,d\mu_v(z)
    \right\}.
	\label{eq:ot-dual}
\end{equation*}
Here, $f$ and $g$ are dual potentials, and $L^1(\mu)$ denotes the space of functions integrable with respect to a measure $\mu$.
% In the present setting, where $\mu_t$ and $\mu_v$ are empirical measures and the ground cost $c$ is given by the squared Euclidean distance, the above dual representation holds and the dual optimum is attained. 
For the empirical measures $\mu_t$ and $\mu_v$ considered here, with Euclidean cost, the above dual representation holds and the dual optimum is attained.
Since both $\mu_t$ and $\mu_v$ are probability measures, the dual potentials are unique only up to an additive constant: if $(f,g)$ is optimal, then $(f+a,g-a)$ is also optimal for any $a\in\mathbb{R}$. This non-uniqueness up to an additive constant does not affect the resulting sample importance weights, since the reweighting step is invariant to adding a constant to $f$. For notational convenience, we fix one optimal dual pair and denote it by $(f^*,g^*)$.

\paragraph{Dual Potentials as Sensitivity Signals.}

The optimal dual potentials admit a sensitivity interpretation with respect to perturbations of the probability mass. Specifically, for empirical measures with supports $\{z_i\}_{i=1}^n$ and $\{z_j'\}_{j=1}^m$, we write
\[
f_i^* = f^*(z_i) \quad \text{and} \quad g_j^* = g^*(z_j')
\]
for the values of the optimal dual potentials on the support points of $\mu_t$ and $\mu_v$, respectively. For empirical measures with fixed support, the Kantorovich dual objective is linear in the support masses for a fixed dual pair, and the optimal dual potentials appear as the corresponding coefficients. In particular, a positive value of $f_i^*$ corresponds to an increase in the transport value $\mathcal{T}(\mu_t,\mu_v)$ when the mass at $z_i$ is increased, whereas a negative value corresponds to a decrease in the transport value for the same perturbation. Accordingly, $f_i^*$ can be interpreted as a sample-level sensitivity signal that captures how $z_i$ contributes to the discrepancy between $\mu_t$ and $\mu_v$. Prior work has leveraged this interpretation for training data valuation \cite{just2023lava, li2024data, kessler2024sava}; we show that it can be developed into a principled reweighting framework for spurious correlation mitigation under label noise. 

\section{Proposed Method}

We propose POTER, which constructs a reweighted empirical training distribution by deriving sample importance weights from OT dual potentials. The key insight is that, by solving OT between the empirical training distribution and an explicitly constructed reference distribution, the training-side dual potential $f_i^{*}$ provides a principled measure of how well each training sample aligns with the desired group composition encoded by the reference distribution. We derive sample importance from these dual potentials through three steps: (i) constructing a reference distribution from limited group supervision, (ii) solving OT in a class-conditioned manner to obtain dual potentials that are not confounded by spurious attribute similarity, and (iii) converting the resulting sensitivity signals into sample weights suitable for robust reweighted ERM training. The following subsections describe each step in turn.

\subsection{Reference Distribution Construction}

We define the reference distribution $\mu_{\mathrm{ref}}$ as a mixture of group-conditional empirical validation distributions with specified mixture weights $\rho$. For each group $g \in \mathcal{G}$, let
\[
D_v^g := \{(x_j', y_j') \in D_v : g_j' = g\}
\]
denote the corresponding validation subset. Given mixture weights $\rho = (\rho_g)_{g \in \mathcal{G}}$ with $\rho_g \ge 0$ and $\sum_{g \in \mathcal{G}} \rho_g = 1$, we define
\[
\mu_{\mathrm{ref}}
:=
\sum_{g \in \mathcal{G}}
\rho_g
\left(
\frac{1}{|D_v^g|}
\sum_{(x_j', y_j') \in D_v^g}
\delta_{(x_j', y_j')}
\right).
\]

Since bias-conflicting minority groups are typically underrepresented in the training distribution but crucial for achieving worst-group robustness, $\rho$ assigns larger mass to these groups and smaller, potentially zero, mass to bias-aligned majority groups.

\subsection{Class-Conditioned Optimal Transport}

\paragraph{Dual Potentials under the Reference Distribution.}
Consider the optimal dual pair $(f^*,g^*)$ associated with the OT problem between $\mu_t$ and $\mu_{\mathrm{ref}}$. As discussed above, the quantity $f_i^*$ can be viewed as a sample-level sensitivity of the OT value with respect to perturbations of the mass at the training sample $z_i$. Samples that are overrepresented or distributionally misaligned with $\mu_{\mathrm{ref}}$ are expected to induce larger values of $f_i^*$, since increasing their mass makes the training distribution harder to transport to the reference distribution. In contrast, samples that align well with $\mu_{\mathrm{ref}}$ are expected to have smaller values of $f_i^*$. Because $\mu_{\mathrm{ref}}$ is constructed to encode the target group composition for worst-group generalization, we use $f_i^*$ as a sample-importance signal, with smaller dual potentials indicating closer alignment with the desired distribution.

\paragraph{Class-Conditioned Cost.}
However, when an attribute is spuriously correlated with the class label, bias-aligned majority samples from one class may lie close in feature space to bias-conflicting minority reference samples from another class that share the same attribute. For example, in the Waterbirds dataset, landbirds on water backgrounds and waterbirds on water backgrounds share the same background, and may therefore be close in feature space despite belonging to different classes. Solving OT jointly over all samples can therefore induce cross-class transport between such samples, causing majority samples that would otherwise be expected to receive large dual potentials to instead be matched to minority reference samples and assigned inappropriately small values. To prevent this effect, we solve OT in a class-conditioned manner using the cost
\[
C_{ij} =
\begin{cases}
	\|\phi(x_i)-\phi(x_j')\|_2^2, & \tilde y_i = y_j',\\
	+\infty, & \tilde y_i \neq y_j',
\end{cases}
\]
where $\phi(\cdot)$ denotes the (pretrained) feature embedding used to define the
transport cost.
The resulting OT problem admits an exact classwise decomposition, which we leverage in practice by solving OT separately within each class and concatenating the resulting dual potentials. The corresponding formal statement and proof are deferred to Appendix~\ref{app:class_conditioned_ot}. Consequently, the resulting dual potentials $\{f_i^*\}_{i=1}^n$ provide a sample-importance signal that reflects alignment with the class-conditional reference distribution and is not confounded by spurious cross-class similarity arising from shared spurious attributes.

\subsection{Sample Reweighting from Dual Potentials}
Having obtained the class-conditioned dual potentials $\{f_i^*\}_{i=1}^n$, we now convert them into sample importance weights to construct a reweighted empirical training distribution for robust ERM training.

To this end, let
\[
\mathcal P_n
:=
\left\{
\mu=\sum_{i=1}^n w_i\,\delta_{z_i}
\;:\;
w_i\ge 0,\ \sum_{i=1}^n w_i=1
\right\},
\]
denote the set of probability measures supported on the training samples $\{z_i\}_{i=1}^n$, and let
$
\mu_0=\sum_{i=1}^n w_i^{0}\,\delta_{z_i}
$
be a base distribution on the same support. For \(\tau>0\), we consider the KL-regularized optimization problem
\begin{equation}
\mu^*
\in
\argmin_{\mu\in\mathcal P_n}
\left\{
\sum_{i=1}^n f_i^*\,w_i
+
\tau\,\mathrm{KL}(\mu \| \mu_0)
\right\}.
\label{eq:kl_reweighting}
\end{equation}

Here, smaller values of \(f_i^*\) indicate training samples that are better aligned with the reference distribution \(\mu_{\mathrm{ref}}\). Accordingly, minimizing the linear term in \eqref{eq:kl_reweighting} assigns larger mass to such samples, thereby reversing the ordering of the dual-potential scores into an importance weighting for reweighted ERM. The KL regularization prevents the degenerate solution that would otherwise concentrate all mass on the lowest-score samples, while also yielding a more general framework that allows prior information on the training distribution to be incorporated through the base measure \(\mu_0\).
\begin{proposition}[KL-regularized reweighting]
	Assume that \(w_i^{0}>0\) for all \(i\in[n]\). Then the optimization problem in \eqref{eq:kl_reweighting} admits the unique solution
	\[
	w_i^*
	=
	\frac{w_i^{0}\exp(-f_i^*/\tau)}
	{\sum_{j=1}^n w_j^{0}\exp(-f_j^*/\tau)},
	\qquad i\in[n].
	\]
	\label{prop:kl_reweighting}
\end{proposition}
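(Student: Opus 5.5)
This is the classical Gibbs variational principle, and I would prove it directly via the Donsker--Varadhan identity rather than with Lagrange multipliers, since that gives existence and uniqueness at once.

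\textbf{Setup.} Write the objective for $\mu=\sum_i w_i\delta_{z_i}\in\mathcal P_n$ as
\[
J(w)=\sum_{i=1}^n f_i^* w_i+\tau\sum_{i=1}^n w_i\log\frac{w_i}{w_i^0},
\]
with the conventions $0\log 0=0$ and $\mathrm{KL}(\mu\|\mu_0)$ finite. This is legitimate because $w_i^0>0$ for all $i$, so every $\mu\in\mathcal P_n$ is absolutely continuous with respect to $\mu_0$. Define
\[
Z:=\sum_{j=1}^n w_j^0\exp(-f_j^*/\tau).
\]
Then $Z>0$ is finite, since the $f_j^*$ are finite values of the dual potential on the support; $f^*\in L^1(\mu_t)$ on a finite support. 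Set $w_i^*:=w_i^0 e^{-f_i^*/\tau}/Z$, which is a strictly positive probability vector, so $\mu^*\in\mathcal P_n$.

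\textbf{Key identity.} Substitute $f_i^*=-\tau\log(Zw_i^*/w_i^0)$ into the linear term. For every $w\in\mathcal P_n$,
\[
J(w)=\tau\sum_i w_i\log\frac{w_i}{w_i^0}-\tau\sum_i w_i\log\frac{Zw_i^*}{w_i^0}
=\tau\,\mathrm{KL}(\mu\|\mu^*)-\tau\log Z,
\]
using $\sum_i w_i=1$. The only care needed is at indices with $w_i=0$; those terms vanish on both sides under the convention $0\log 0=0$, and $w_i^*>0$ keeps $\mathrm{KL}(\mu\|\mu^*)$ finite.

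\textbf{Conclusion.} By Gibbs' inequality, $\mathrm{KL}(\mu\|\mu^*)\ge 0$, with equality if and only if $\mu=\mu^*$. Since $\tau>0$, $J$ attains its minimum value $-\tau\log Z$ exactly at $w=w^*$. This establishes existence and uniqueness together.

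There is no real obstacle. The only points requiring attention are the boundary of the simplex, handled by the conventions above, and the role of the assumption $w_i^0>0$, which ensures that $\mu_0$ has full support. As a remark, adding a constant $a$ to all $f_i^*$ multiplies $Z$ by $e^{-a/\tau}$ and leaves $w^*$ unchanged. This confirms the invariance of the reweighting to the additive ambiguity in the dual potentials noted earlier.
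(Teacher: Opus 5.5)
Your proof is correct, and it takes a different route from the paper's.

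The paper argues through convex optimization. It notes that the objective is strictly convex on $\Delta_n$, which gives uniqueness, and asserts that the minimizer lies in the interior of the simplex. It then solves the Lagrangian stationarity condition $f_i^*+\tau(\log(w_i/w_i^0)+1)+\lambda=0$ and fixes $\lambda$ by normalization.

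You instead write down the Gibbs candidate and verify it through the exact identity $J(w)=\tau\,\mathrm{KL}(\mu\|\mu^*)-\tau\log Z$. Gibbs' inequality then gives existence, uniqueness, and the optimal value $-\tau\log Z$ in one step.

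Your route is more rigorous at the boundary of the simplex. The paper states that ``the optimum is attained in the interior'' without justification; a full argument would need existence by compactness together with the fact that $\frac{d}{dw}\,w\log w\to-\infty$ as $w\downarrow 0$. Your identity holds for every $w\in\Delta_n$, including vectors with zero coordinates, so you need neither existence nor interiority as separate steps.

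The paper's route, in turn, derives the formula rather than checking a guessed one. Its stationarity template also extends to other regularizers, such as other $f$-divergences, where no clean ``complete the divergence'' identity is available.

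Your closing remark on invariance under $f^*\mapsto f^*+a$ matches the paper's observation in its implementation appendix.
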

Proposition \ref{prop:kl_reweighting} yields a unique closed-form solution, giving the exponential reweighting rule. The parameter \(\tau\) controls the sharpness of the reweighting: smaller values place more mass on samples with lower dual-potential scores, whereas larger values produce a flatter weighting closer to the base distribution \(\mu_0\). 
When additional prior information on the training distribution is available---such as known group information---it can be incorporated through the choice of $\mu_0$. In our setting, however, we assume no such prior information and therefore take $\mu_0 = \mu_t$, the empirical training distribution. For the standard empirical training distribution, where \(w_i^{0}=1/n\), this reduces to
\[
w_i^*
=
\frac{\exp(-f_i^*/\tau)}
{\sum_{j=1}^n \exp(-f_j^*/\tau)},
\qquad i\in[n].
\]
Finally, letting \(h_\theta\) denote the model parameterized by \(\theta\) and \(\ell\) a standard classification loss, we train the model by minimizing the reweighted empirical risk
\[
\inf_{\theta\in\Theta}
\sum_{i=1}^n w_i^*\,\ell\bigl(h_\theta(x_i),\tilde y_i\bigr).
\]
The proof of Proposition \ref{prop:kl_reweighting} is deferred to Appendix~\ref{app:kl_reweighting}.

\section{Experiments}

\subsection{Datasets}
We conduct experiments on four widely used benchmarks for spurious correlation: CMNIST, Waterbirds, CelebA, and CivilComments.

\textbf{CMNIST} \cite{arjovsky2019invariant} is a colored variant of MNIST in which the digit label is the target and the color serves as the spurious attribute. Following prior work, digits 0--4 are assigned to label 0 and digits 5--9 to label 1, yielding four groups defined by label and color.

\textbf{Waterbirds} \cite{sagawa2019distributionally} is constructed from CUB bird images \cite{wah2011caltech} and Places backgrounds \cite{zhou2017places}, with groups defined by bird type and background. The target label is bird type, while background is spuriously correlated with the label in the training set. The minority groups are \textit{landbird on water background} and \textit{waterbird on land background}.

\textbf{CelebA} \cite{liu2015deep} is used for blond-hair classification, where gender serves as the spurious attribute. Groups are defined by the combination of hair color and gender, and the minority group is \textit{blond male}.

\textbf{CivilComments} \cite{borkan2019nuanced} is a text toxicity
classification benchmark. We use the WILDS version of the dataset \cite{koh2021wilds}, in which identity-related text can act as spurious attributes. Following prior work \cite{liu2021just,zhang2022correct,hanimproving,gourabathina2026robustness}, we consider eight identity attributes: \textit{male}, \textit{female}, \textit{LGBTQ}, \textit{Christian}, \textit{Muslim}, \textit{other religion}, \textit{black}, and \textit{white}, defining 16 subgroups.

Further details on dataset construction and experimental settings are provided in \autoref{sec:appendix-dataset}.

\subsection{Experimental Settings}

\begin{table*}[t]
	\centering
	\caption{Worst-group and average accuracy on CMNIST, Waterbirds, CelebA, and CivilComments under the standard evaluation setting. \emph{Group Labels} indicates whether the method uses subgroup annotations from the training data, the validation data, or both.
    \emph{No Extra Training} indicates whether the method operates without additional retraining. Among methods that use only validation group labels, boldface indicates the best performance and underlining denotes the second-best.}
	\label{tab:main_results}
	\small
	\setlength{\tabcolsep}{5pt}
	\renewcommand{\arraystretch}{1.15}
	\resizebox{\textwidth}{!}{%
	\begin{tabular}{l c c cc cc cc cc}
		\specialrule{1.2pt}{0pt}{0pt}
		\multirow{2}{*}[-1.0ex]{Algorithm} &
		\multirow{2}{*}[-1.0ex]{\makecell[c]{Group Labels\\(Tr/Val)}} &
		\multirow{2}{*}[-1.0ex]{\makecell[c]{No Extra\\Training}} &
		\multicolumn{2}{c}{\raisebox{-0.6ex}{CMNIST}} &
		\multicolumn{2}{c}{\raisebox{-0.6ex}{Waterbirds}} &
		\multicolumn{2}{c}{\raisebox{-0.6ex}{CelebA}} &
		\multicolumn{2}{c}{\raisebox{-0.6ex}{CivilComments}} \\
		\cmidrule(lr){4-5} \cmidrule(lr){6-7} \cmidrule(lr){8-9} \cmidrule(lr){10-11}
		& & & Worst Acc & Avg Acc & Worst Acc & Avg Acc & Worst Acc & Avg Acc & Worst Acc & Avg Acc \\
		\midrule
		
		Group DRO             & Tr/Val & \checkmark & 73.1$_{\pm 0.3}$ & 74.8$_{\pm 0.2}$ & 90.6$_{\pm 0.2}$ & 92.7$_{\pm 0.1}$ & 89.3$_{\pm 1.3}$ & 92.6$_{\pm 0.3}$ & 69.0$_{\pm 0.9}$ & 89.9$_{\pm 0.2}$ \\
		LISA                 & Tr/Val & \checkmark & 73.3$_{\pm 0.2}$ & 74.0$_{\pm 0.1}$ & 89.2$_{\pm 0.6}$ & 91.8$_{\pm 0.3}$ & 89.3$_{\pm 1.1}$ & 92.4$_{\pm 0.4}$ & 72.6$_{\pm 0.1}$ & 89.2$_{\pm 0.9}$ \\
		DFR$^{\mathrm{Tr}}$  & Tr/Val & -          & 59.8$_{\pm 0.4}$ & 62.1$_{\pm 0.2}$ & 90.2$_{\pm 0.8}$ & 97.0$_{\pm 0.3}$ & 80.7$_{\pm 2.4}$ & 90.6$_{\pm 0.7}$ & 58.0$_{\pm 1.3}$ & 92.0$_{\pm 0.1}$ \\
		PDE                  & Tr/Val & \checkmark & 72.6$_{\pm 0.7}$ & 73.0$_{\pm 0.4}$ & 90.3$_{\pm 0.3}$ & 92.4$_{\pm 0.8}$ & 91.0$_{\pm 0.4}$ & 92.0$_{\pm 0.6}$ & 71.5$_{\pm 0.5}$ & 86.3$_{\pm 1.7}$ \\
		
		\specialrule{0.8pt}{2pt}{0pt}
		\noalign{\vskip 5pt}
		\specialrule{0.8pt}{0pt}{2pt}
		
		ERM                  & -     & \checkmark & 3.4$_{\pm 0.9}$  & 12.9$_{\pm 0.8}$ & 62.6$_{\pm 0.3}$ & 97.3$_{\pm 1.0}$ & 47.7$_{\pm 2.1}$ & 94.9$_{\pm 0.3}$ & 58.6$_{\pm 1.7}$ & 92.1$_{\pm 0.4}$ \\
		JTT                  & Val   & -          & 67.3$_{\pm 5.1}$ & 76.4$_{\pm 3.3}$ & 83.8$_{\pm 1.2}$ & 89.3$_{\pm 0.7}$ & 81.5$_{\pm 1.7}$ & 88.1$_{\pm 0.3}$ & 69.3$_{\mathrm{N/A}}$ & 91.1$_{\mathrm{N/A}}$ \\
		CnC                  & Val   & -          & --                & --                & 88.5$_{\pm 0.3}$ & 90.9$_{\pm 0.1}$ & 88.8$_{\pm 0.9}$ & 89.9$_{\pm 0.5}$ & 68.9$_{\pm 2.1}$ & 81.7$_{\pm 0.5}$ \\
        SSA                  & Val   & -          & 71.1$_{\pm 0.4}$ & 75.0$_{\pm 0.3}$ & 89.0$_{\pm 0.6}$ & 92.2$_{\pm 0.9}$ & \underline{89.8}$_{\pm 1.3}$ & 92.8$_{\pm 0.1}$ & 69.9$_{\pm 2.0}$ & 88.2$_{\pm 2.0}$ \\

		AFR                  & Val   & -          &  53.5$_{\pm 0.0}$              &  60.6$_{\pm 0.0}$             & 90.4$_{\pm 1.1}$ & 94.2$_{\pm 1.2}$ & 82.0$_{\pm 0.5}$ & 91.3$_{\pm 0.3}$ & 68.7$_{\pm 0.6}$ & 89.8$_{\pm 0.6}$ \\
		GIC                  & Val   & -          & \underline{72.2}$_{\pm 0.5}$ & 73.2$_{\pm 0.2}$ & 86.3$_{\pm 0.1}$ & 89.6$_{\pm 1.3}$ & 89.4$_{\pm 0.2}$ & 91.9$_{\pm 0.1}$ & \underline{72.5}$_{\pm 0.3}$ & 90.0$_{\pm 0.3}$ \\
		LEIA                 & Val   & -          &  56.0$_{\pm 4.8}$              &  63.2$_{\pm 2.0}$        & \underline{90.7}$_{\pm 0.2}$ & 93.3$_{\pm 0.7}$ & 85.0$_{\pm 0.9}$ & 95.2$_{\pm 0.1}$ & \textbf{72.9}$_{\pm 0.2}$ & 90.9$_{\pm 0.8}$ \\
		
		\specialrule{0.8pt}{0pt}{2pt}
        \rowcolor{yellow!12}
		POTER (Ours)         & Val   & \checkmark &  \textbf{72.5}$_{\pm 0.7}$ & 73.6$_{\pm 0.4}$               & \textbf{90.9}$_{\pm 0.3}$ & 92.7$_{\pm 0.1}$ & \textbf{90.4}$_{\pm 0.5}$ & 91.2$_{\pm 0.3}$ & 71.0$_{\pm 0.6}$ & 89.9$_{\pm 0.3}$ \\
		\specialrule{1.2pt}{0pt}{0pt}
	\end{tabular}
}
\end{table*}

We consider three experimental settings: the standard benchmark setting, the symmetric label noise setting, and a subgroup-concentrated label noise setting that we introduce.

\paragraph{Standard Benchmark Setting.}
The standard benchmark setting evaluates worst-group robustness under spurious correlations using the original benchmarks without additional label corruption. We note that label noise is already present in
standard benchmarks: CMNIST contains 25\% label noise by construction, and the
Waterbirds benchmark contains mislabeled examples \cite{asgari2022masktune},
suggesting that label noise is a naturally occurring phenomenon in spurious
correlation benchmarks.

\paragraph{Symmetric Label Noise.}
Following prior work \cite{NEURIPS2024_fba9133d}, we inject symmetric label noise into the training set by independently flipping each training label with probability $\eta$. We vary $\eta$ across a range of noise levels to evaluate robustness under different degrees of label corruption.

% \paragraph{Subgroup-Concentrated Label Noise.}
% We introduce an additional noise setting in which corruption is concentrated within minority subgroups rather than distributed uniformly across the training set. While symmetric label noise corrupts all groups at the same rate—up to 30\% in our experiments—subgroup-concentrated noise affects only minority subgroups at rates up to 60\%, leaving the remaining groups uncorrupted. Because minority subgroups are small, the overall training-set noise rate remains low, yet the corruption within the affected subgroups is severe. This setting probes whether methods remain robust to locally concentrated label noise, which poses a distinct challenge from globally distributed symmetric noise.

\paragraph{Subgroup-Concentrated Label Noise.}
We introduce a noise setting in which corruption is concentrated within minority
subgroups at rates up to 60\%, leaving the remaining groups uncorrupted. While
symmetric label noise corrupts all groups uniformly, this setting probes whether
methods remain robust to locally concentrated label noise—a distinct and
previously overlooked challenge.

\subsection{Baselines}

We compare our method against representative baselines under two settings: one in which training group labels are available and one in which only validation group labels are available.
For methods with training group labels, we consider Group DRO \cite{sagawa2019distributionally}, LISA \cite{yao2022improving}, DFR \cite{kirichenkolast}, and PDE \cite{deng2024robust}. For methods with only validation group labels, we consider JTT \cite{liu2021just}, CnC \cite{zhang2022correct}, SSA \cite{nam2022spread}, AFR \cite{qiu2023simple}, GIC \cite{hanimproving}, and LEIA \cite{gourabathina2026robustness}. 
In the label-noise setting, we additionally focus on Group DRO, a representative loss-based robust optimization method;
AFR, which performs loss-based sample upweighting for last-layer retraining; 
and KNN-RAD \cite{NEURIPS2024_fba9133d}, the most closely related method that directly addresses label noise under spurious correlations. Detailed descriptions are provided in \autoref{sec:appendix-baseline}.

\subsection{Results}

\begin{figure}[t]
    \centering

    \subfigure[Reference set (left) and noisy training set (right).]{
        \begin{minipage}[t]{0.48\textwidth}
            \centering
            \includegraphics[height=3.5cm]{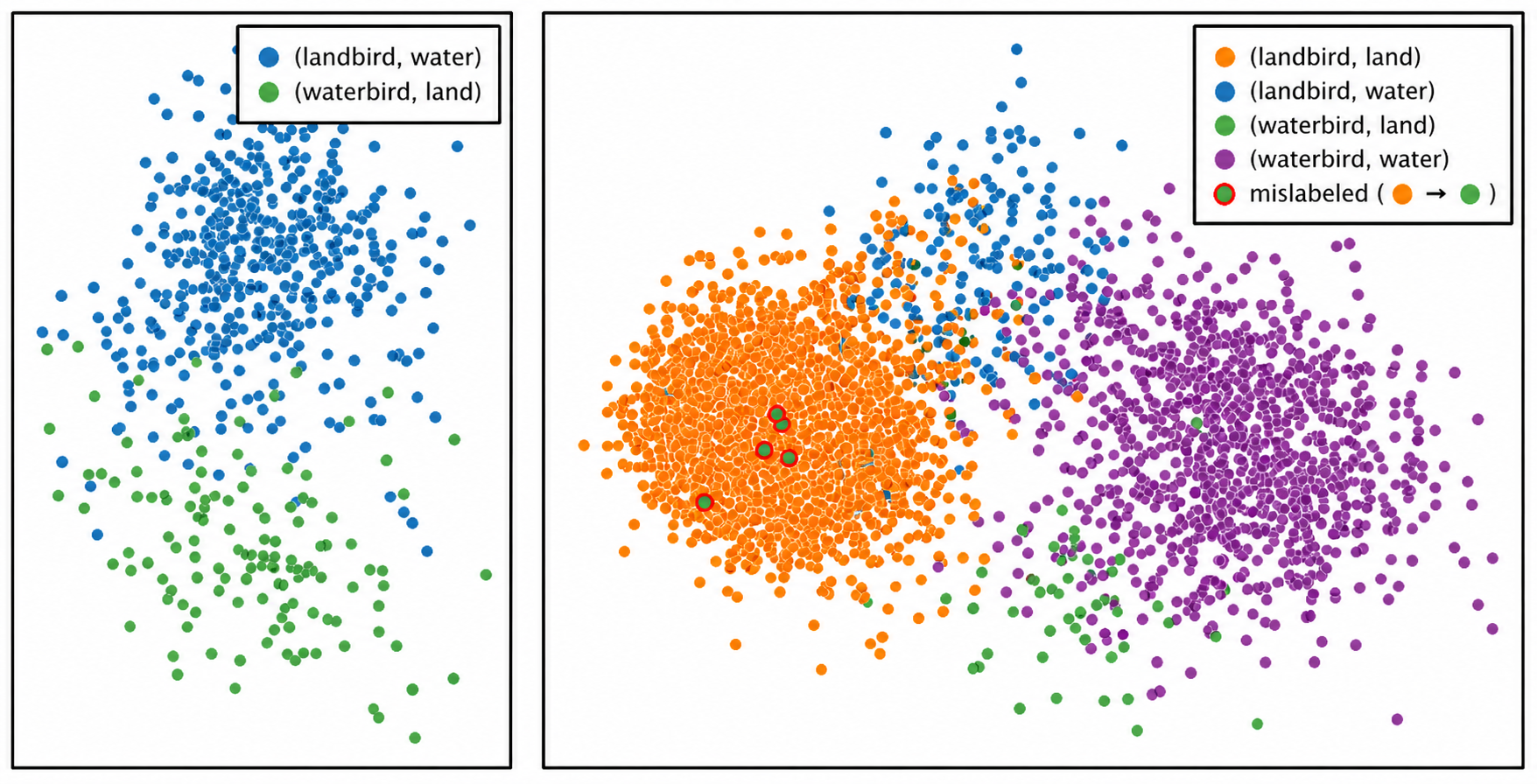}

            \label{fig:waterbird_train_ref_pca}
        \end{minipage}
    }
    \hfill
    \subfigure[$f^*$ values for training samples labeled ``waterbird''.]{
        \begin{minipage}[t]{0.48\textwidth}
            \centering
            \includegraphics[height=3.5cm]{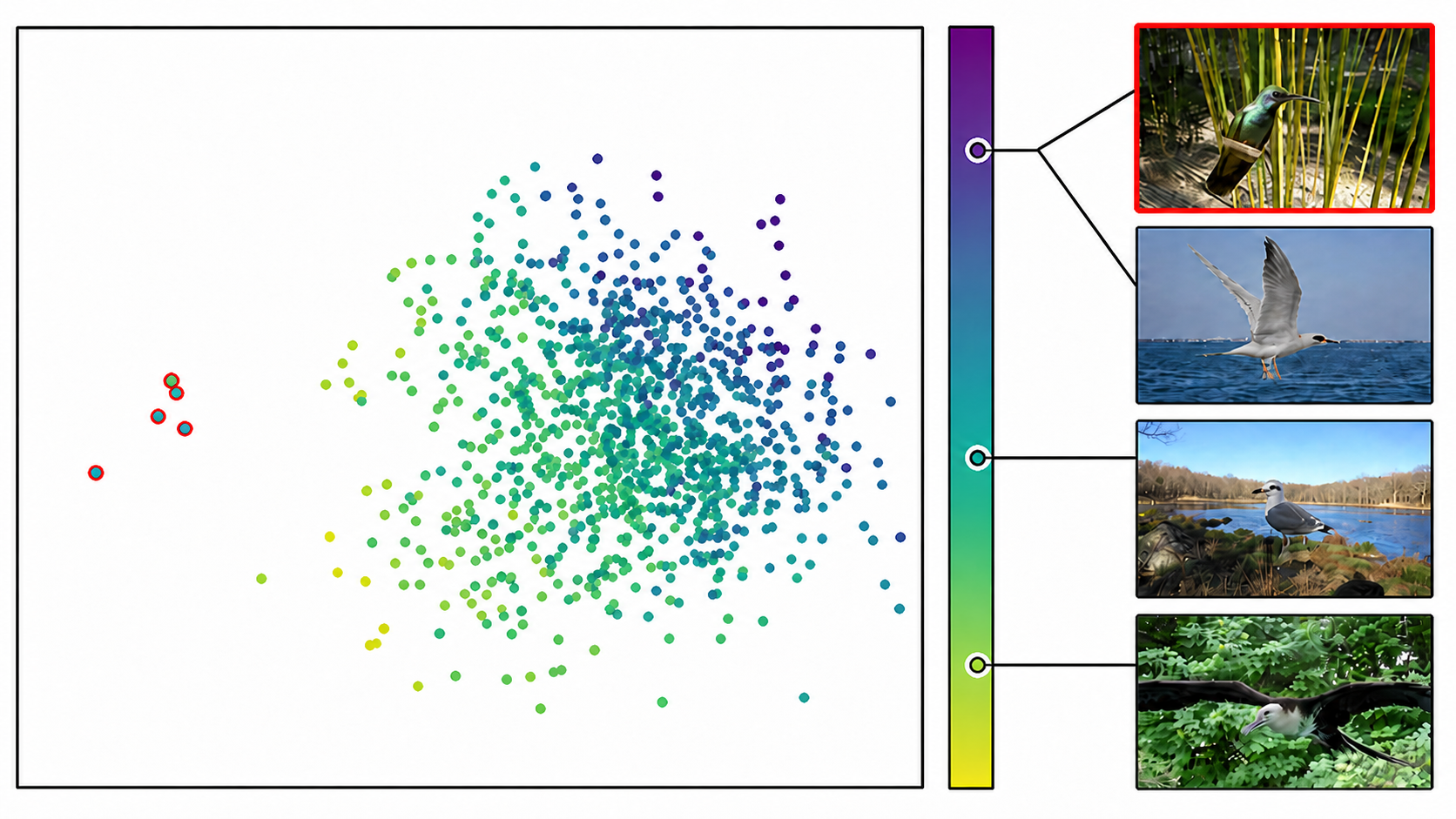}
            \label{fig:waterbird_classwise_fstar}
        \end{minipage}
    }

    \caption{
    Analysis of reference-based sample importance on Waterbirds.
    (a) PCA projection of feature embeddings for the reference set and noisy training set.
    The reference set consists of validation samples from the two bias-conflicting minority groups, while the noisy training set contains all four groups with mislabeled samples marked by red outlines.
    (b) Training samples with observed label ``waterbird'' are colored by the class-conditioned OT dual potential $f^*$ computed against the waterbird-on-land reference samples.
    Brighter colors indicate smaller $f^*$ values and therefore larger sample importance.
    The visual examples, from top to bottom, show a mislabeled sample, a strongly bias-aligned sample, a bias-aligned sample containing both land and water background regions, and a bias-conflicting sample.
    }
    \label{fig:waterbird_analysis}
\end{figure}

\subsubsection{Standard Spurious Correlation Benchmarks}

Table~\ref{tab:main_results} reports results on standard spurious correlation
benchmarks. Among methods that use only validation group labels, POTER achieves
superior worst-group accuracy on CMNIST, Waterbirds, and CelebA, and remains
competitive on CivilComments, without relying on additional training stages.
Notably, this performance matches or exceeds several methods with access to full training group labels.

Figure~\ref{fig:waterbird_analysis} offers insight into the mechanism underlying
POTER's strong performance. As shown in Figure~\ref{fig:waterbird_analysis}(b),
bias-conflicting samples receive high importance, whereas strongly bias-aligned
samples receive low importance. Beyond this group-level distinction, samples
near the boundary between land and water backgrounds retain relatively high
importance even within the bias-aligned group. Unlike group-level reweighting
approaches such as Group DRO, POTER assigns fine-grained weights according to
alignment with the reference distribution, allowing informative samples to be
emphasized regardless of their group membership. This sample-level flexibility
is consistent with the strong worst-group accuracy observed in
Table~\ref{tab:main_results}.

\subsubsection{Robustness under Label Noise}

\paragraph{Interpretation of Robustness under Label Noise.}
The label-noise robustness of POTER follows from using class-conditioned
reference alignment rather than loss-based sample selection. For samples with
observed label ``waterbird'', POTER compares them to the waterbird-on-land
reference distribution. As illustrated by the red-outlined mislabeled samples
in Figure~\ref{fig:waterbird_analysis}, a landbird-on-land sample mislabeled
as ``waterbird'' is evaluated within the observed waterbird class, but its
feature embedding remains far from the waterbird-on-land reference. It therefore
receives a large $f^*$ value and hence low importance, as do strongly
bias-aligned samples. Thus, the alignment-based weighting rule effectively
downweights both mislabeled and strongly bias-aligned samples through a single
unified mechanism.

\begin{table*}[t]
	\centering
    \caption{Worst-group accuracy on Waterbirds and CelebA under different symmetric label noise rates.}
	\label{tab:waterbirds_celeba_label_noise_results}
    \footnotesize
	\setlength{\tabcolsep}{6pt}
    \renewcommand{\arraystretch}{0.88}
	\resizebox{\textwidth}{!}{%
	\begin{tabular}{l l c c cccc}
		\specialrule{1.2pt}{0pt}{0pt}
		\multirow{2}{*}[-1.0ex]{Dataset} &
		\multirow{2}{*}[-1.0ex]{Method} &
		\multirow{2}{*}[-1.0ex]{\makecell[c]{Group Labels\\(Tr/Val)}} &
		\multirow{2}{*}[-1.0ex]{\makecell[c]{No Extra\\Training}} &
		\multicolumn{4}{c}{\raisebox{-0.6ex}{Label Noise (\%)}} \\
		\cmidrule(lr){5-8}
		& & & & 0 & 10 & 20 & 30 \\
		\midrule
		
		\multirow{4}{*}{Waterbirds}
        & Group DRO
        & Tr/Val & \checkmark
        & 90.6$_{\pm 0.2}$
        & 72.9$_{\pm 1.6}$
        & 54.3$_{\pm 1.0}$
        & 52.2$_{\pm 3.6}$ \\

        & AFR
        & Val & -
        & 88.3$_{\pm 0.9}$
        & 58.7$_{\pm 0.4}$
        & 61.2$_{\pm 5.9}$
        & 52.9$_{\pm 0.0}$ \\

		\cmidrule(lr){2-8}

		& KNN-RAD
		& Val & -
		& \textbf{91.0}$_{\pm 0.1}$
		& 82.4$_{\pm 0.7}$
		& 74.7$_{\pm 1.1}$
		& 68.9$_{\pm 2.1}$\\
		
		& POTER (Ours)
		& Val & \checkmark
		& \cellcolor{yellow!12} 90.9$_{\pm 0.3}$ 
        & \cellcolor{yellow!12} \textbf{89.4}$_{\pm 0.4}$  
        & \cellcolor{yellow!12} \textbf{87.9}$_{\pm 1.6}$  
        & \cellcolor{yellow!12} \textbf{85.3}$_{\pm 1.4}$   \\

		\specialrule{0.8pt}{2pt}{2pt}

		\multirow{4}{*}{CelebA}
		& Group DRO
		& Tr/Val & \checkmark
		& 89.3$_{\pm 1.3}$
		& 66.9$_{\pm 0.3}$
		& 59.5$_{\pm 2.6}$
		& 54.8$_{\pm 1.8}$ \\
        
        & AFR
        & Val & -
        & 80.9$_{\pm 2.3}$
        & 69.4$_{\pm 4.8}$
        & 67.7$_{\pm 2.4}$
        & 58.8$_{\pm 6.3}$ \\

		\cmidrule(lr){2-8}

		& KNN-RAD
		& Val & -
		& 83.9$_{\pm 0.0}$
		& 80.0$_{\pm 0.9}$
		& 78.1$_{\pm 1.3}$
		& 70.2$_{\pm 2.6}$\\
      
		& POTER (Ours)
		& Val & \checkmark
        
		& \cellcolor{yellow!12} \textbf{90.4}$_{\pm 0.5}$ 
		& \cellcolor{yellow!12} \textbf{87.6}$_{\pm 0.3}$ 
		& \cellcolor{yellow!12} \textbf{82.8}$_{\pm 1.0}$ 
		& \cellcolor{yellow!12} \textbf{78.2}$_{\pm 0.4}$ \\
		
		\specialrule{1.2pt}{0pt}{0pt}
	\end{tabular}
}
\end{table*}

\begin{figure}[t]
	\centering
	\subfigure[Waterbirds.]{
		\includegraphics[width=0.48\textwidth]{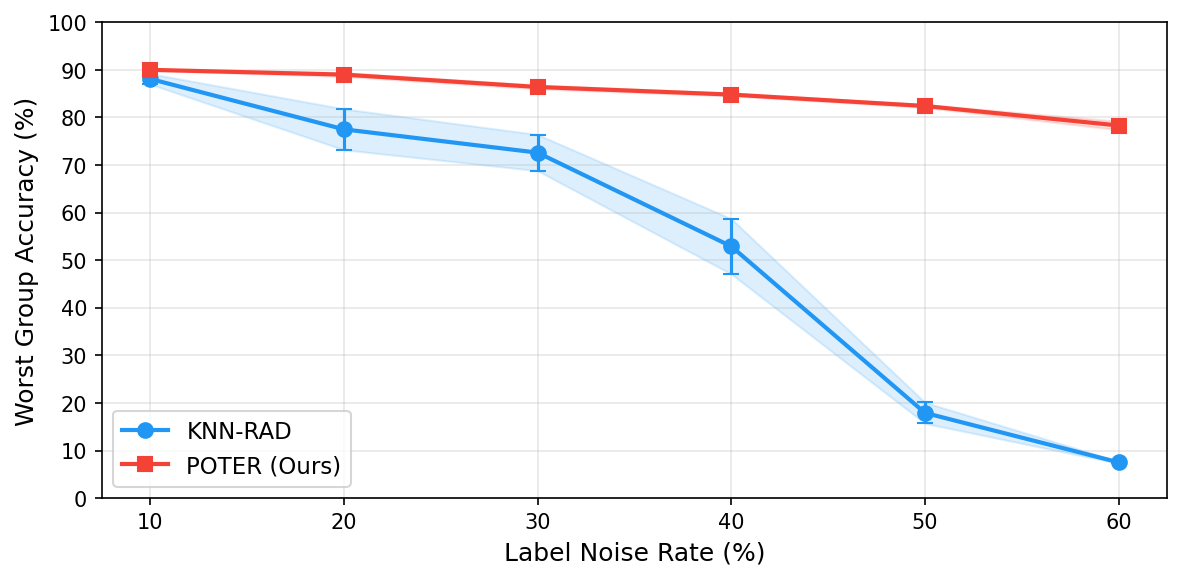}
		\label{fig:waterbirds_subgroup_concentrated_label_noise}
	}
	\hfill
	\subfigure[CelebA.]{
		\includegraphics[width=0.48\textwidth]{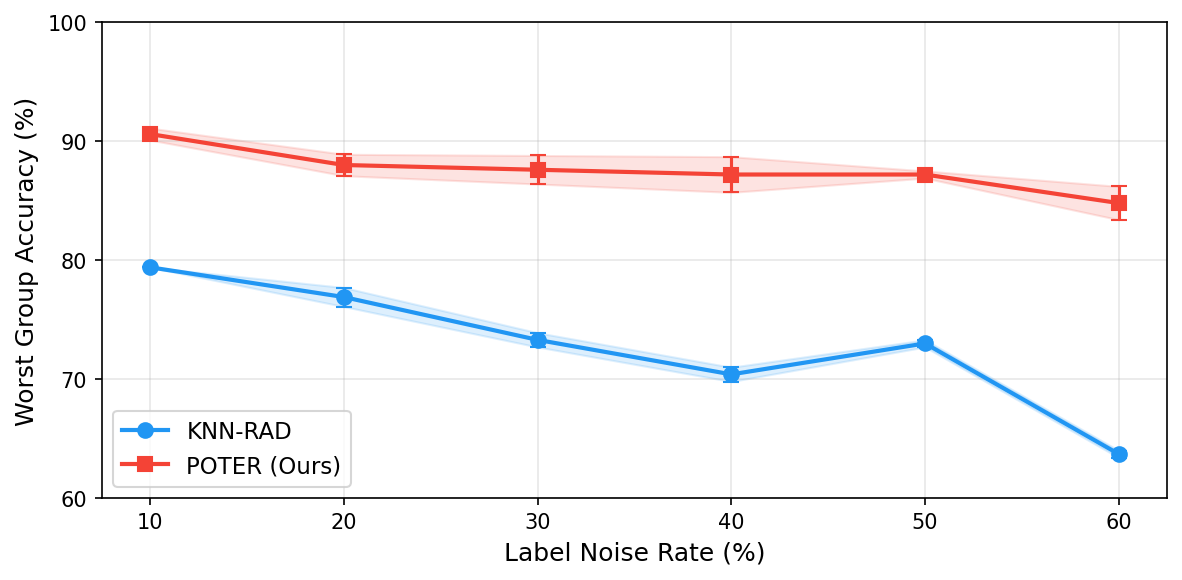}
		\label{fig:celeba_subgroup_concentrated_label_noise}
	}
\caption{Worst-group accuracy under subgroup-concentrated label noise on Waterbirds and CelebA. Label noise is concentrated on the minority subgroups: $(\text{waterbird}, \text{land background})$ and $(\text{landbird}, \text{water background})$ for Waterbirds, and $(\text{blond}, \text{male})$ for CelebA. The y-axis range is adjusted for each dataset to highlight within-dataset performance differences.}
	\label{fig:subgroup_concentrated_label_noise}
\end{figure}

\paragraph{Symmetric Label Noise.}
Table~\ref{tab:waterbirds_celeba_label_noise_results} reports worst-group
accuracy on Waterbirds and CelebA under increasing symmetric label noise.
Loss-based methods degrade substantially as noise increases: on Waterbirds,
Group DRO drops from 90.6\% to 52.2\% and AFR from 88.3\% to 52.9\% at 30\%
noise, consistent with the fragility of loss-based signals under label
corruption. Since methods such as SSA and GIC incorporate Group DRO as part of
their training pipeline, the degradation of Group DRO is indicative of a
vulnerability shared by this broader class of approaches. KNN-RAD, which
explicitly addresses label noise, is more stable than loss-based baselines but
is consistently outperformed by POTER once label noise is introduced. At 30\%
noise, POTER achieves 85.3\% on Waterbirds and 78.2\% on CelebA, outperforming
KNN-RAD by 16.4 and 8.0 percentage points, respectively.

\paragraph{Subgroup-Concentrated Label Noise.}
Figure~\ref{fig:subgroup_concentrated_label_noise} reports worst-group accuracy
under subgroup-concentrated label noise, where corruption is restricted to
minority subgroups. KNN-RAD degrades sharply over the evaluated range of
subgroup noise rates, falling from 88.1\% to 7.5\% on Waterbirds and from
79.4\% to 63.7\% on CelebA between 10\% and 60\% subgroup noise. This suggests
that local-neighborhood-based label correction can become unreliable when label
corruption is concentrated within the subgroups most critical for worst-group
generalization. In contrast, POTER remains stable across the same range,
achieving 78.3\% on Waterbirds and 84.8\% on CelebA at 60\% noise,
demonstrating robustness to locally concentrated label corruption.

\subsubsection{Analysis of the Weighting Signal}
We examine the weighting signal before reweighted ERM.
Figure~\ref{fig:fstar_group_boxplot} shows the group-wise distributions of
the class-conditioned OT dual potentials $f^*$ on Waterbirds and CelebA, and
Table~\ref{tab:group_mean_weights} reports the resulting mean sample weight
per group on Waterbirds, CelebA, and CMNIST, normalized so that uniform
weighting equals~1 (group definitions follow
Appendix~\ref{sec:appendix-dataset}).

Two properties stand out. First, within each class, the bias-conflicting
groups receive lower $f^*$ than their bias-aligned counterparts, and hence
larger weights: 6.32 and 10.19 for the two Waterbirds minority groups, and
2.22 and 1.91 for the two bias-conflicting CMNIST groups. In CelebA, the
within-class mean weights are 0.93 for $g_1$ versus 0.35 for $g_2$, and
12.68 for $g_4$ versus 2.41 for $g_3$. Class balancing assigns equal total
mass to each class, giving the much larger non-blond class a lower
per-sample weight scale. Thus, $g_1$ remains relatively upweighted within
its class despite its mean weight being below one.

Second, the weights vary substantially within groups rather than
reducing to group-level coefficients. On Waterbirds, the interquartile range
of the weights spans 0.34--7.79 for landbirds on water and 0.91--16.07 for
waterbirds on land. As illustrated in Figure~\ref{fig:waterbird_analysis}(b),
this variation is informative even inside bias-aligned groups: a
waterbird-on-water image with a mixed land--water background receives a
larger weight than a prototypical open-water image, consistent with weaker
reliance on the background shortcut. POTER thus emphasizes underrepresented
groups while retaining sample-level variation within them.

\begin{figure}[t]
	\centering
	\includegraphics[width=\textwidth]{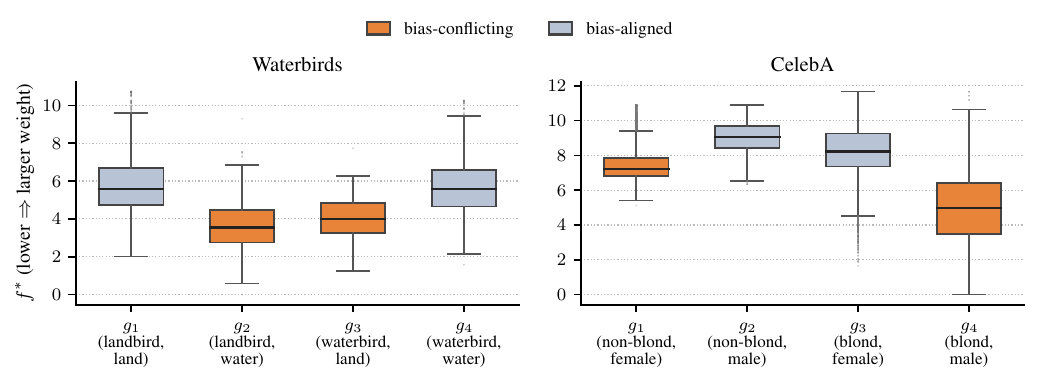}
	\caption{Distribution of class-standardized OT dual potentials $f^*$ by group on
	Waterbirds and CelebA in the standard setting. Within each class, lower
	potentials favor larger sample weights. Bias-conflicting groups (orange)
	have lower median potentials than their bias-aligned counterparts (light blue).}
	\label{fig:fstar_group_boxplot}

	\vspace{1.2em}
	% Table 3 lives inside this float so it always follows Figure 3.
	\captionof{table}{Mean sample weight per group in the standard setting, normalized so that
	uniform weighting equals~1. Boldface indicates the bias-conflicting groups.}
	\label{tab:group_mean_weights}
	\vspace{0.5em}
	\small
	\setlength{\tabcolsep}{10pt}
	\renewcommand{\arraystretch}{1.2}
	\begin{tabular}{l c c c c}
	\specialrule{1.2pt}{0pt}{0pt}
	Dataset & $g_1$ & $g_2$ & $g_3$ & $g_4$ \\
	\midrule
	Waterbirds & 0.31 & $\mathbf{6.32}$ & $\mathbf{10.19}$ & 1.87 \\
	CelebA     & $\mathbf{0.93}$ & 0.35 & 2.41 & $\mathbf{12.68}$ \\
	CMNIST     & $\mathbf{2.22}$ & 0.78 & 0.68 & $\mathbf{1.91}$ \\
	\specialrule{1.2pt}{0pt}{0pt}
	\end{tabular}
\end{figure}

\section{Conclusion}
We presented POTER, a principled reweighting framework based on optimal
transport that derives sample importance from distributional alignment. Our
experiments reveal a key vulnerability of existing methods: the premise that
high-loss samples reliably identify bias-conflicting examples can break down
under label noise, with further degradation when corruption is concentrated
within minority subgroups. While prior work has diagnosed the label-noise
sensitivity of existing mitigation methods~\cite{qiaounderstanding}, targeted
remedies remain limited: they can sacrifice clean-setting
performance~\cite{oh2022improving} or remain tied to retraining-based
pipelines~\cite{NEURIPS2024_fba9133d}. POTER addresses this issue through a
single unified weighting mechanism that handles spurious correlation and label
noise simultaneously, requiring no retraining and adding only minimal overhead
for computing OT dual potentials. We hope this work motivates broader
consideration of overlooked failure modes in robust learning under
subpopulation shift, particularly those induced by realistic data imperfections
such as label noise. One limitation is that POTER still depends on a
representative, class-labeled validation set. Appendix~\ref{sec:appendix-validation-robustness}
examines reduced and noisy references, while robustness to entirely missing
validation subgroups remains an open challenge.
\newpage

\begin{ack}
This work was supported by the National Research Foundation of Korea (NRF)
grant funded by the Korea government (MSIT; RS-2026-25468697,
RS-2026-25607210), and by the Institute of Information \& Communications
Technology Planning \& Evaluation (IITP) Global Data-X Leader HRD program
grant funded by the Korea government (MSIT; IITP-2024-RS-2024-00441244).
\end{ack}

\bibliography{references}
\bibliographystyle{plain}

%%%%%%%%%%%%%%%%%%%%%%%%%%%%%%%%%%%%%%%%%%%%%%%%%%%%%%%%%%%%
\newpage
\appendix

\section{Class-Conditioned OT Decomposition}
\label{app:class_conditioned_ot}

Let
\[
\mu_t=\sum_{i=1}^n w_i^t\,\delta_{z_i},
\qquad
\mu_{\mathrm{ref}}=\sum_{j=1}^m w_j^{\mathrm{ref}}\,\delta_{z_j'}.
\]
For each class \(k\in[K]\), define
\[
I_k:=\{i:\tilde y_i=k\},
\qquad
J_k:=\{j:y_j'=k\},
\]
and assume that the classwise masses match:
\[
\sum_{i\in I_k} w_i^t=\sum_{j\in J_k} w_j^{\mathrm{ref}},
\qquad \forall k\in[K].
\]
This condition requires equal training and reference mass within each class,
while allowing the total mass to differ across classes.

For notational simplicity, we continue to write \(x_i\) and \(x_j'\) for the feature embeddings on which the transport cost is computed. We use the class-conditioned cost
\[
C_{ij}=
\begin{cases}
	\|x_i-x_j'\|_2^2, & \tilde y_i=y_j',\\
	+\infty, & \tilde y_i\neq y_j',
\end{cases}
\]
and define the corresponding classwise empirical measures by
\[
\mu_t^{(k)} := \sum_{i\in I_k} w_i^t\,\delta_{z_i},
\qquad
\mu_{\mathrm{ref}}^{(k)} := \sum_{j\in J_k} w_j^{\mathrm{ref}}\,\delta_{z_j'}.
\]
Accordingly, \(\mathcal{T}(\mu_t,\mu_{\mathrm{ref}})\) denotes the OT value under the class-conditioned cost \(C\), and
\(\mathcal{T}^{(k)}(\mu_t^{(k)},\mu_{\mathrm{ref}}^{(k)})\) denotes its restriction to class \(k\).

Under this cost, every finite-cost coupling is supported on same-class pairs.
The mass-matching condition ensures that a feasible coupling exists within each
class, leading to the following decomposition.

\begin{proposition}[Class-conditioned OT decomposition]
	Under the class-conditioned cost \(C\), the OT value decomposes across classes as
	\[
	\mathcal{T}(\mu_t,\mu_{\mathrm{ref}})
	=
	\sum_{k=1}^K \mathcal{T}^{(k)}\bigl(\mu_t^{(k)},\mu_{\mathrm{ref}}^{(k)}\bigr).
	\]
	Moreover, if \((f^{(k)*},g^{(k)*})\) is an optimal dual pair for the \(k\)-th classwise OT problem for each \(k\in[K]\), then the global dual vectors \(f^*\in\mathbb{R}^n\) and \(g^*\in\mathbb{R}^m\), defined by
	\[
	f_i^* := f_i^{(k)*}\quad \text{for } i\in I_k,
	\qquad
	g_j^* := g_j^{(k)*}\quad \text{for } j\in J_k,
	\]
	form an optimal dual pair for \(\mathcal{T}(\mu_t,\mu_{\mathrm{ref}})\).
	\label{prop:class_conditioned_ot}
\end{proposition}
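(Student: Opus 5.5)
We argue directly at the level of the finite linear programs, in two halves: first the primal decomposition of the value, then a dual certificate built by concatenating the classwise optimal duals.

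\emph{Primal decomposition.} Any coupling $\pi=(\pi_{ij})\in\Pi(\mu_t,\mu_{\mathrm{ref}})$ with finite cost must satisfy $\pi_{ij}=0$ whenever $\tilde y_i\neq y_j'$. Its restriction $\pi^{(k)}:=(\pi_{ij})_{i\in I_k,\,j\in J_k}$ therefore has row sums $w_i^t$ for $i\in I_k$ and column sums $w_j^{\mathrm{ref}}$ for $j\in J_k$, so $\pi^{(k)}\in\Pi(\mu_t^{(k)},\mu_{\mathrm{ref}}^{(k)})$. Its cost is $\sum_k\sum_{i\in I_k,j\in J_k}C_{ij}\pi_{ij}$, which gives $\mathcal T\ge\sum_k\mathcal T^{(k)}$.

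Conversely, the mass-matching condition guarantees that each classwise coupling set is nonempty; for instance, the normalized product coupling is feasible. Taking optimal classwise couplings $\pi^{(k)*}$ (which exist, being minimizers of a finite LP over a compact polytope) and placing them block-diagonally yields a feasible global coupling. Its cost equals $\sum_k\mathcal T^{(k)}$, giving the reverse inequality. A degenerate class with $I_k=J_k=\emptyset$ contributes zero and can be ignored.

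\emph{Dual optimality.} Define $f^*,g^*$ by concatenation as in the statement. We first check feasibility of the constraint $f_i^*+g_j^*\le C_{ij}$. For same-class pairs $i\in I_k$, $j\in J_k$, it is exactly classwise dual feasibility. For cross-class pairs $C_{ij}=+\infty$, so the constraint holds trivially; this relies on the classwise potentials being finite real numbers, which holds because they are optimal for a finite LP.

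Next we compute the dual objective:
\[
\sum_i f_i^*w_i^t+\sum_j g_j^*w_j^{\mathrm{ref}}=\sum_k\Bigl(\sum_{i\in I_k}f_i^{(k)*}w_i^t+\sum_{j\in J_k}g_j^{(k)*}w_j^{\mathrm{ref}}\Bigr)=\sum_k\mathcal T^{(k)},
\]
where the last step is classwise strong duality (Kantorovich duality for finite measures, as invoked earlier in the paper). By weak duality, any feasible dual pair has objective at most $\mathcal T$. Since this pair attains $\sum_k\mathcal T^{(k)}=\mathcal T$ by the first half, it is optimal.

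The main subtlety is making duality rigorous with an extended-valued cost $+\infty$. We would handle this by rewriting the primal as an LP over the allowed index set $E=\bigcup_k I_k\times J_k$ only, with dual constraints imposed only on $E$. Weak duality is then elementary: for any feasible $\pi$ supported on $E$ and any feasible $(f,g)$,
\[
\sum_{(i,j)\in E}C_{ij}\pi_{ij}\ge\sum_{(i,j)\in E}(f_i+g_j)\pi_{ij}=\sum_i f_iw_i^t+\sum_j g_jw_j^{\mathrm{ref}}.
\]
This avoids needing any global strong-duality theorem: the proof uses only classwise finite strong duality together with weak duality.

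A small remark is also warranted. The classwise additive-constant freedom, namely shifting $(f^{(k)},g^{(k)})$ to $(f^{(k)}+a_k,\,g^{(k)}-a_k)$ independently in each class, still produces an optimal global pair. The global dual is therefore non-unique beyond a single constant. This does not affect the statement, but it matters for the reweighting invariance claim.
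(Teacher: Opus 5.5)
Your proposal is correct and follows essentially the same route as the paper. Finite-cost couplings are block-diagonal, and by mass matching any block-diagonal assembly of classwise couplings is feasible, so the primal value splits as $\sum_k \mathcal{T}^{(k)}$; the concatenated classwise duals are feasible because the cross-class constraints are vacuous under $C_{ij}=+\infty$, and their objective equals $\sum_k \mathcal{T}^{(k)}=\mathcal{T}$. Your explicit weak-duality argument over the finite-cost index set $E$ makes rigorous the final optimality step that the paper leaves implicit, and your remark on independent classwise additive shifts is correct and consistent with the paper's within-class normalization of potentials and weights.
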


\begin{proof}
	Under the class-conditioned cost
	\[
	C_{ij}=
	\begin{cases}
		\|x_i-x_j'\|_2^2, & \tilde y_i=y_j',\\
		+\infty, & \tilde y_i\neq y_j',
	\end{cases}
	\]
	any feasible coupling \(\pi\) satisfying \(\langle C,\pi\rangle<\infty\) must satisfy
	\[
	\pi_{ij}=0
	\qquad \text{whenever } \tilde y_i\neq y_j'.
	\]
	Hence every finite-cost feasible plan is supported on
	\[
	\bigcup_{k=1}^K (I_k\times J_k).
	\]
	For each \(k\in[K]\), let \(\pi^{(k)}\) denote the restriction of \(\pi\) to \(I_k\times J_k\). Then the marginal constraints imply
	\[
	\pi^{(k)}\in \Pi\bigl(\mu_t^{(k)},\mu_{\mathrm{ref}}^{(k)}\bigr).
	\]
	Conversely, by the classwise mass-matching assumption, any collection of classwise couplings
	\[
	\pi^{(k)}\in \Pi\bigl(\mu_t^{(k)},\mu_{\mathrm{ref}}^{(k)}\bigr),
	\qquad k\in[K],
	\]
	induces a feasible global coupling \(\pi\) by setting
	\[
	\pi_{ij}:=
	\begin{cases}
		\pi^{(k)}_{ij}, & (i,j)\in I_k\times J_k \text{ for some } k,\\
		0, & \text{otherwise}.
	\end{cases}
	\]
	Therefore, the feasible set of finite-cost couplings factorizes across classes.
	
	On this feasible set,
	\[
	\langle C,\pi\rangle
	=
	\sum_{k=1}^K \langle C^{(k)},\pi^{(k)}\rangle,
	\]
	where \(C^{(k)}\) denotes the restriction of \(C\) to \(I_k\times J_k\). It follows that
	\[
	\mathcal{T}(\mu_t,\mu_{\mathrm{ref}})
	=
	\inf_{\pi\in\Pi(\mu_t,\mu_{\mathrm{ref}})} \langle C,\pi\rangle
	=
	\sum_{k=1}^K
	\inf_{\pi^{(k)}\in\Pi(\mu_t^{(k)},\mu_{\mathrm{ref}}^{(k)})}
	\langle C^{(k)},\pi^{(k)}\rangle
	=
	\sum_{k=1}^K \mathcal{T}^{(k)}\bigl(\mu_t^{(k)},\mu_{\mathrm{ref}}^{(k)}\bigr).
	\]
	
	Now let \((f^{(k)*},g^{(k)*})\) be an optimal dual pair for the \(k\)-th classwise OT problem. Define global vectors \(f^*\in\mathbb{R}^n\) and \(g^*\in\mathbb{R}^m\) by
	\[
	f_i^*:=f_i^{(k)*}\quad \text{for } i\in I_k,
	\qquad
	g_j^*:=g_j^{(k)*}\quad \text{for } j\in J_k.
	\]
	Since \(C_{ij}=+\infty\) whenever \(\tilde y_i\neq y_j'\), the constraint
	\[
	f_i^*+g_j^*\le C_{ij}
	\]
	is automatically satisfied whenever \(i\in I_k\), \(j\in J_\ell\), and \(k\neq \ell\), and on each set \(I_k\times J_k\) it reduces to the classwise dual feasibility constraint. Thus \((f^*,g^*)\) is feasible for the global dual problem. Its dual objective value is
	\[
	\sum_{i=1}^n f_i^*\,w_i^t+\sum_{j=1}^m g_j^*\,w_j^{\mathrm{ref}}
	=
	\sum_{k=1}^K
	\left(
	\sum_{i\in I_k} f_i^{(k)*}w_i^t
	+
	\sum_{j\in J_k} g_j^{(k)*}w_j^{\mathrm{ref}}
	\right),
	\]
	which, by optimality of each classwise dual pair, equals
	\[
	\sum_{k=1}^K \mathcal{T}^{(k)}\bigl(\mu_t^{(k)},\mu_{\mathrm{ref}}^{(k)}\bigr)
	=
	\mathcal{T}(\mu_t,\mu_{\mathrm{ref}}).
	\]
	Hence \((f^*,g^*)\) is an optimal dual pair for \(\mathcal{T}(\mu_t,\mu_{\mathrm{ref}})\).
\end{proof}

The decomposition is determined by the observed class partition and matched
marginal masses and therefore does not require any assumption on the
label-corruption mechanism.
In our implementation, the two empirical measures are normalized within each
class to satisfy the mass-matching condition, as described in
Appendix~\ref{sec:appendix-ot-computation}.

\paragraph{Computational Implication.}
Proposition \ref{prop:class_conditioned_ot} also yields a practical computational benefit in a classwise implementation. Let
\[
n_k := |I_k|,
\qquad
m_k := |J_k|.
\]
Rather than operating on a dense \(n\times m\) transport matrix, the class-conditioned formulation can be implemented as \(K\) independent OT subproblems of sizes \(n_k\times m_k\). For Sinkhorn-style solvers, this reduces the total per-iteration cost from \(O(nm)\) for a dense full-matrix implementation to
\[
O\!\left(\sum_{k=1}^K n_k m_k\right).
\]
When class sizes are roughly balanced, i.e.,
\[
n_k \approx n/K,
\qquad
m_k \approx m/K,
\]
this becomes
\[
O(nm/K).
\]
In addition, the classwise subproblems are independent and can therefore be parallelized straightforwardly.

\section{Proof of Proposition \ref{prop:kl_reweighting}}
\label{app:kl_reweighting}

\begin{proof}
	The optimization problem in \eqref{eq:kl_reweighting} is equivalent to
	\[
	\min_{w\in\Delta_n}
	\left\{
	\sum_{i=1}^n f_i^* w_i
	+
	\tau \sum_{i=1}^n w_i \log \frac{w_i}{w_i^0}
	\right\},
	\]
	where
	\[
	\Delta_n := \left\{ w\in\mathbb{R}_+^n : \sum_{i=1}^n w_i = 1 \right\}.
	\]

	Since \(w_i^0>0\) for all \(i\in[n]\) and \(\tau>0\), the objective is strictly convex on \(\Delta_n\), so the minimizer is unique. Moreover, the optimum is attained in the interior of \(\Delta_n\). Therefore, the first-order optimality conditions are necessary and sufficient for optimality.
	
	Consider the Lagrangian
	\[
	\mathcal{L}(w,\lambda)
	=
	\sum_{i=1}^n f_i^* w_i
	+
	\tau \sum_{i=1}^n w_i \log \frac{w_i}{w_i^0}
	+
	\lambda\left(\sum_{i=1}^n w_i-1\right).
	\]
	For each \(i\in[n]\), the first-order condition is
	\[
	\frac{\partial \mathcal{L}}{\partial w_i}
	=
	f_i^*
	+
	\tau\left(\log \frac{w_i}{w_i^0}+1\right)
	+
	\lambda
	=
	0.
	\]
	Hence
	\[
	w_i
	=
	w_i^0 \exp\!\left(-\frac{f_i^*}{\tau}\right)
	\exp\!\left(-\frac{\lambda}{\tau}-1\right).
	\]
	Summing over \(i\in[n]\), the constraint \(\sum_{i=1}^n w_i=1\) implies
	\[
	\exp\!\left(-\frac{\lambda}{\tau}-1\right)
	=
	\left(\sum_{j=1}^n w_j^0 \exp\!\left(-\frac{f_j^*}{\tau}\right)\right)^{-1}.
	\]
	Therefore,
	\[
	w_i
	=
	\frac{w_i^0 \exp(-f_i^*/\tau)}
	{\sum_{j=1}^n w_j^0 \exp(-f_j^*/\tau)},
	\qquad i\in[n].
	\]

	This proves that the unique minimizer \(\mu^*=\sum_{i=1}^n w_i^*\delta_{z_i}\) satisfies
	\[
	w_i^*
	=
	\frac{w_i^0 \exp(-f_i^*/\tau)}
	{\sum_{j=1}^n w_j^0 \exp(-f_j^*/\tau)},
	\qquad i\in[n].
	\]
\end{proof}

\paragraph{Scope of the theoretical results.}
Propositions~\ref{prop:class_conditioned_ot} and~\ref{prop:kl_reweighting}
characterize POTER's weight construction: they establish the classwise OT
decomposition under the stated mass-matching condition and the unique
KL-regularized reweighting solution for given dual potentials. These results
do not provide a finite-sample guarantee on the target risk of a classifier
trained using the resulting weights. Under label shift, where class proportions
change while class-conditional distributions remain fixed, establishing such a
guarantee would additionally require relating the population reweighted objective
to the target risk, controlling estimation error in the OT potentials and sample
weights, and analyzing the generalization of weighted ERM with these
data-dependent weights. We leave such an analysis to future work.

Known target class proportions can be incorporated into the class-mass
normalization step. When these proportions are unknown, our default
class-balancing scheme is motivated by worst-group robustness. For a fixed
classifier and fixed group-conditional distributions, the population worst-group
accuracy lower-bounds the accuracy under any mixture of the groups. Thus,
this bound continues to hold under changes in group composition, including
those induced by changes in class proportions when the groups jointly encode
class and spurious attributes.
This population-level relation motivates the use of worst-group accuracy when
the test-time group composition is unknown.

\section{Dataset Details}
\label{sec:appendix-dataset}
\textbf{Colored MNIST (CMNIST)} \cite{arjovsky2019invariant, yao2022improving}.
The CMNIST dataset is a binary classification benchmark derived from MNIST. Digits \(0\)--\(4\) are assigned to class \(0\), and digits \(5\)--\(9\) are assigned to class \(1\). Digit color is used as the spurious attribute, with two possible colors: red and green. This yields four groups defined by the pair of class label and color:
\[
g_1=\{0, \text{green}\},\quad
g_2=\{1, \text{green}\},\quad
g_3=\{0, \text{red}\},\quad
g_4=\{1, \text{red}\}.
\]
The training, validation, and test sets contain \(30{,}000\), \(10{,}000\), and \(20{,}000\) examples, respectively. In the training set, color is spuriously correlated with the label: the red-to-green ratio is \(8:2\) for class \(0\) and \(2:8\) for class \(1\). In the validation set, the color distribution is balanced within each class, with a red-to-green ratio of \(1:1\). In the test set, the label-color correlation is reversed: the red-to-green ratio is \(1:9\) for class \(0\) and \(9:1\) for class \(1\). Following \cite{arjovsky2019invariant}, labels are randomly flipped with probability \(0.25\).

\begin{figure}[htbp]
	\centering
	\subfigure[{0, green}]{
		\includegraphics[width=3cm, height=3.5cm, keepaspectratio]{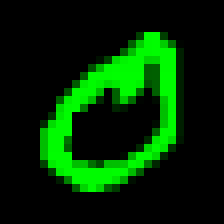}
	}
	\subfigure[{1, green}]{
		\includegraphics[width=3cm, height=3.5cm, keepaspectratio]{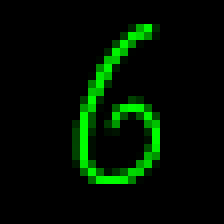}
	}
	\subfigure[{0, red}]{
		\includegraphics[width=3cm, height=3.5cm, keepaspectratio]{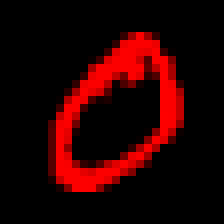}
	}
	\subfigure[{1, red}]{
		\includegraphics[width=3cm, height=3.5cm, keepaspectratio]{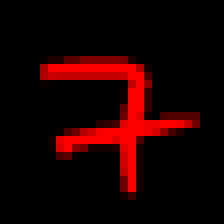}
	}
	\caption{Example images from the CMNIST dataset. The groups are $g_1 = \{0, \text{green}\}$, $g_2 = \{1, \text{green}\}$, $g_3 = \{0, \text{red}\}$, and $g_4 = \{1, \text{red}\}$.}
	\label{fig:cmnist_examples}
\end{figure}

\textbf{Waterbirds} \cite{sagawa2019distributionally}.
The Waterbirds dataset is a binary image classification benchmark constructed by placing bird images from the Caltech-UCSD Birds 200-2011 (CUB) dataset \cite{wah2011caltech} onto backgrounds from the Places dataset \cite{zhou2017places}. The target label is bird type, and the spurious attribute is background type. Bird species in CUB are partitioned into waterbirds and landbirds; the waterbird class includes Albatross, Auklet, Cormorant, Frigatebird, Fulmar, Gull, Jaeger, Kittiwake, Pelican, Puffin, Tern, Gadwall, Grebe, Mallard, Merganser, Guillemot, and Pacific Loon, while all remaining species are treated as landbirds. The four groups are defined by the pair of bird type and background:
\[
\begin{aligned}
g_1&=\{\text{landbird},\text{land}\},&
g_2&=\{\text{landbird},\text{water}\},\\
g_3&=\{\text{waterbird},\text{land}\},&
g_4&=\{\text{waterbird},\text{water}\}.
\end{aligned}
\]
The standard split contains \(4{,}795\) training examples, \(1{,}199\) validation examples, and \(5{,}794\) test examples. In the training set, bird type is strongly correlated with background: \(3{,}498\) examples are landbirds on land backgrounds, \(184\) are landbirds on water backgrounds, \(56\) are waterbirds on land backgrounds, and \(1{,}057\) are waterbirds on water backgrounds. Thus, the bias-conflicting minority groups are \(g_2\) and \(g_3\). The validation split contains \(467\), \(466\), \(133\), and \(133\) examples from \(g_1\) through \(g_4\), respectively.

\begin{figure}[htbp]
	\centering
	\subfigure[{landbird, land}]{
		\includegraphics[width=3cm, height=3.5cm, keepaspectratio]{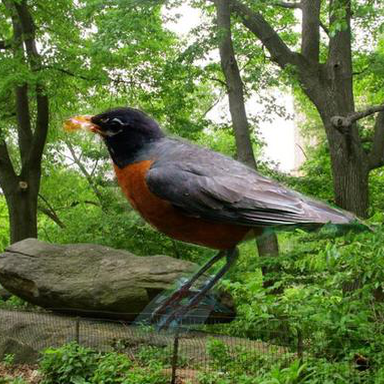}
	}
	\subfigure[{landbird, water}]{
		\includegraphics[width=3cm, height=3.5cm, keepaspectratio]{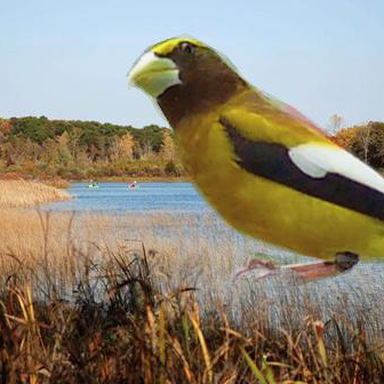}
	}
	\subfigure[{waterbird, land}]{
		\includegraphics[width=3cm, height=3.5cm, keepaspectratio]{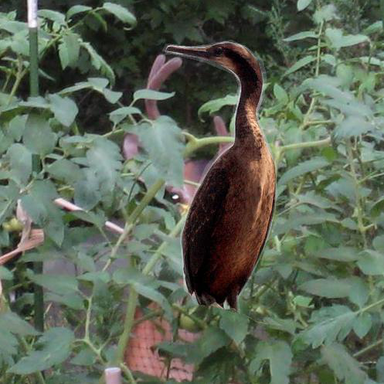}
	}
	\subfigure[{waterbird, water}]{
		\includegraphics[width=3cm, height=3.5cm, keepaspectratio]{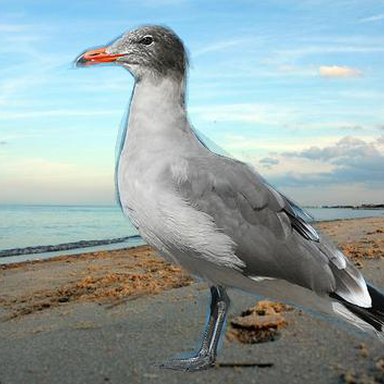}
	}
	\caption{Example images from the Waterbirds dataset. The groups are $g_1 = \{\text{landbird, land}\}$, $g_2 = \{\text{landbird, water}\}$, $g_3 = \{\text{waterbird, land}\}$, and $g_4 = \{\text{waterbird, water}\}$.}
	\label{fig:waterbirds_examples}
\end{figure}

\textbf{CelebA} \cite{liu2015deep}.
CelebA is a large-scale facial attribute dataset. We use the standard blond-hair classification task, where the target label is hair color and the spurious attribute is gender. The target label is binary, distinguishing blond hair from non-blond hair. The four groups are defined by the pair of hair color and gender:
\[
\begin{aligned}
g_1&=\{\text{non-blond hair},\text{female}\},&
g_2&=\{\text{non-blond hair},\text{male}\},\\
g_3&=\{\text{blond hair},\text{female}\},&
g_4&=\{\text{blond hair},\text{male}\}.
\end{aligned}
\]
The training and validation splits contain \(162{,}770\) and \(19{,}867\) examples, respectively. In the training set, the group counts for \(g_1\) through \(g_4\) are \(71{,}629\), \(66{,}874\), \(22{,}880\), and \(1{,}387\), respectively. The validation split contains \(8{,}535\), \(8{,}276\), \(2{,}874\), and \(182\) examples from \(g_1\) through \(g_4\), respectively. Thus, the blond male group \(g_4\) is the minority group and is substantially underrepresented compared with the other groups.

\begin{figure}[htbp]
	\centering
	\subfigure[{non-blond, female}]{
		\includegraphics[width=3cm, height=3.5cm, keepaspectratio]{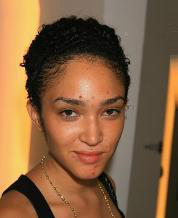}
	}
	\subfigure[{non-blond, male}]{
		\includegraphics[width=3cm, height=3.5cm, keepaspectratio]{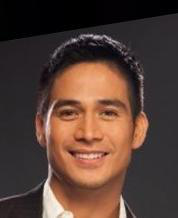}
	}
	\subfigure[{blond, female}]{
		\includegraphics[width=3cm, height=3.5cm, keepaspectratio]{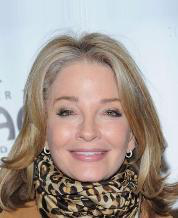}
	}
	\subfigure[{blond, male}]{
		\includegraphics[width=3cm, height=3.5cm, keepaspectratio]{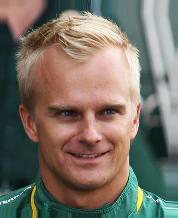}
	}
	\caption{Example images from the CelebA dataset. The groups are $g_1 = \{\text{non-blond hair, female}\}$, $g_2 = \{\text{non-blond hair, male}\}$, $g_3 = \{\text{blond hair, female}\}$, and $g_4 = \{\text{blond hair, male}\}$.}
	\label{fig:celeba_examples}
\end{figure}

\textbf{CivilComments} \cite{borkan2019nuanced}.
CivilComments is a text classification benchmark for online toxicity detection.
We use the WILDS \cite{koh2021wilds} version of the dataset, where the target
label is binary toxicity: \(Y=0\) denotes a non-toxic comment and \(Y=1\)
denotes a toxic comment. The WILDS dataset defines four coarse groups by the
combination of toxicity and whether a comment mentions any identity attribute.
The training, validation, and test splits contain \(269{,}038\), \(45{,}180\),
and \(133{,}782\) comments, respectively.

Following prior work, we use the 16-group evaluation protocol based on eight
demographic identity attributes:
\[
\mathcal{A}=\{\text{male},\ \text{female},\ \text{LGBTQ},\ \text{Christian},\
\text{Muslim},\ \text{other religion},\ \text{black},\ \text{white}\}.
\]
Let \(A_j(x)\in\{0,1\}\) indicate whether comment \(x\) mentions the \(j\)-th
identity attribute. For each identity attribute \(j\in\{1,\ldots,8\}\) and
toxicity label \(y\in\{0,1\}\), we define
\[
g_{j,y}
=
\{(x,y') : A_j(x)=1,\; y'=y\}.
\]
Thus, the evaluation groups correspond to toxic and non-toxic comments
mentioning each of the eight identity attributes. Unlike CMNIST, Waterbirds,
and CelebA, these 16 groups are overlapping rather than a partition of the
dataset, since a single comment may mention multiple identities. Per-group
sample counts across the train and validation splits are reported in
\autoref{tab:civilcomments_group_counts}.

\begin{table}[htbp]
\centering
\caption{Per-group sample counts in CivilComments-WILDS for each of the eight identity attributes. Counts are obtained by thresholding both the toxicity score and the identity-attribute annotation at \(0.5\). Groups are overlapping: a comment that mentions multiple identities is counted in multiple rows.}
\label{tab:civilcomments_group_counts}
\small
\setlength{\tabcolsep}{6pt}
\renewcommand{\arraystretch}{1.1}
\begin{tabular}{l rr rr}
\specialrule{1.2pt}{0pt}{0pt}
\multirow{2}{*}[-1.0ex]{Identity} &
\multicolumn{2}{c}{Train} &
\multicolumn{2}{c}{Validation} \\
\cmidrule(lr){2-3} \cmidrule(lr){4-5}
& \(Y=0\) & \(Y=1\) & \(Y=0\) & \(Y=1\) \\
\midrule
male            & 25{,}373 & 4{,}437 & 4{,}050 &   715 \\
female          & 31{,}282 & 4{,}962 & 5{,}120 &   771 \\
LGBTQ           &  6{,}155 & 2{,}265 & 1{,}099 &   358 \\
Christian       & 24{,}292 & 2{,}446 & 4{,}166 &   384 \\
Muslim          & 10{,}829 & 3{,}125 & 1{,}598 &   512 \\
other religion &  5{,}541 & 1{,}003 &    824 &   162 \\
black           &  6{,}785 & 3{,}111 & 1{,}119 &   533 \\
white           & 12{,}016 & 4{,}682 & 2{,}015 &   852 \\
\specialrule{1.2pt}{0pt}{0pt}
\end{tabular}
\end{table}

\section{Baseline Details}
\label{sec:appendix-baseline}

We provide brief descriptions of the baselines used in our comparisons.

\begin{itemize}[leftmargin=*]
	\item \textbf{ERM}:
	Empirical risk minimization trains the model by minimizing the average training loss, without using group information or an explicit robustness objective. It serves as the standard non-robust reference point.
	
	\item \textbf{Group DRO} \cite{sagawa2019distributionally}:
	Group DRO assumes access to group annotations during training and optimizes the worst-group loss rather than the average loss. It adaptively emphasizes groups with larger losses, encouraging the model to improve performance on underrepresented or bias-conflicting groups.
	
	\item \textbf{LISA} \cite{yao2022improving}:
	LISA mitigates reliance on spurious features through a selective interpolation strategy. It applies Mixup-style augmentation between example pairs that either share the same label but differ in the spurious attribute, or share the same spurious attribute but differ in the label. This selective pairing encourages the learned representation to be less dependent on shortcut correlations.
	
	\item \textbf{DFR} \cite{kirichenkolast}:
	DFR first trains a standard ERM model and then retrains only the last layer using a group-balanced subset. In our comparison, we follow the training-set variant used in prior work and construct the balanced retraining set from training examples with group annotations, denoted by DFR$^{\mathrm{Tr}}$.
	
	\item \textbf{PDE} \cite{deng2024robust}:
	PDE begins training from a group-balanced subset of the training data and progressively expands this subset to include additional training examples. This staged expansion is designed to delay the model's reliance on dominant spurious correlations while still allowing it to use more data as training proceeds.
	
    \item \textbf{JTT} \cite{liu2021just}:
    JTT is a two-stage approach that first trains an ERM model and uses misclassified examples as proxies for hard or bias-conflicting examples. It then constructs an upsampled training set in which these examples are repeated, and trains a second model on this dataset, improving robustness without requiring group annotations for the full training set.
	
	\item \textbf{CnC} \cite{zhang2022correct}:
	CnC is a two-stage approach that first trains a standard ERM model and uses its predictions to identify pairs of examples that share the same class but are likely to differ in spurious attributes. It then trains a second model with a contrastive objective that pulls together same-class examples with different spurious attributes, reducing dependence on shortcut cues.
	
	\item \textbf{SSA} \cite{nam2022spread}:
	SSA estimates spurious attributes for training examples using limited group supervision and forms pseudo groups from the predicted attributes and class labels. Group DRO is then applied to these inferred groups.
	
	\item \textbf{AFR} \cite{qiu2023simple}:
	AFR first trains a standard ERM model and then retrains the last layer with a weighted loss on top of fixed representations. The weighting rule emphasizes examples on which the ERM model performs poorly, increasing the influence of minority or bias-conflicting samples without relying on training-set group annotations.
	
    \item \textbf{GIC} \cite{hanimproving}:
    GIC infers group labels by training a spurious-attribute classifier using
    comparison data with a different group distribution from the training set. The
    predicted spurious attributes are combined with class labels to form inferred
    groups, which can then be used with downstream robust learning methods such as
    Group DRO.
    	
	\item \textbf{LEIA} \cite{gourabathina2026robustness}:
	LEIA proposes a two-stage adaptation method that identifies a low-dimensional subspace in the representation where model errors concentrate. It then applies a low-rank adjustment to the classifier logits, improving group robustness in a parameter-efficient manner without modifying the backbone or requiring full training-set group annotations.
	
	\item \textbf{KNN-RAD} \cite{NEURIPS2024_fba9133d}:
	KNN-RAD addresses label noise under spurious correlations by first performing kNN-based label correction in representation space and then applying a robust last-layer retraining procedure. It is the most closely related baseline in our label-noise experiments because it explicitly targets the joint setting of label noise and spurious correlations.
\end{itemize}

\section{Implementation Details}
\label{sec:appendix-implementation}

All experiments were conducted using a single NVIDIA GeForce RTX 3090 GPU.
We first describe the implementation details of POTER and then provide details
on baseline implementation.
The code for POTER is available at \url{https://github.com/Sung-Ho-Jo/POTER}.

\subsection{Feature Embeddings and Training Configurations}

For each dataset, we compute OT dual potentials in a fixed pretrained feature
space \(\phi(\cdot)\). For CMNIST, Waterbirds, and CelebA, we use features from
an ImageNet-pretrained ResNet-50. For CivilComments, we use features from
BERT-base-uncased. Importantly, these feature extractors are not trained on the
downstream task before computing OT dual potentials, distinguishing POTER from
methods that rely on representations learned by a preliminary task-specific
model. Features are extracted once before the OT step and kept fixed for OT
computation.

For the final reweighted ERM training stage, we use the same architecture as the
feature extractor used for OT: ResNet-50 for image datasets and BERT-base-uncased
for CivilComments. We use SGD for CMNIST, Waterbirds, and CelebA, and AdamW for
CivilComments. The training configurations are summarized in
Table~\ref{tab:implementation_training_configs}. These settings are based on
prior protocols for image datasets \cite{jo2025mitigating} and CivilComments
\cite{liu2021just}, with the exact values reported in the table. We do not use
a learning rate scheduler. For Waterbirds and CelebA, we use standard data
augmentations. Specifically, we apply random resized cropping, horizontal
flipping, and color jittering for Waterbirds, and random resized cropping,
horizontal flipping, and random rotation for CelebA. Model selection is
performed using worst-group accuracy on the validation set, and all reported
results are averaged over three random seeds.

\begin{table}[htbp]
\centering
\caption{Training configurations for the final reweighted ERM stage.}
\label{tab:implementation_training_configs}
\small
\setlength{\tabcolsep}{6pt}
\renewcommand{\arraystretch}{1.1}
\begin{tabular}{l c c c c c}
\specialrule{1.2pt}{0pt}{0pt}
Dataset & Optimizer & Learning Rate & Weight Decay & Batch Size & Epochs \\
\midrule
CMNIST        & SGD   & \(10^{-2}\) & \(10^{-2}\) & 128 & 300 \\
Waterbirds    & SGD   & \(10^{-5}\) & \(1.0\)      & 128 & 300 \\
CelebA        & SGD   & \(10^{-5}\) & \(10^{-2}\) & 128 & 50  \\
CivilComments & AdamW & \(10^{-5}\) & \(10^{-2}\) & 32  & 3   \\
\specialrule{1.2pt}{0pt}{0pt}
\end{tabular}
\end{table}

\subsection{Reference Distribution Construction}

For the standard benchmark setting and subgroup-concentrated label noise
setting, we construct the reference distribution from validation examples in the
bias-conflicting groups: landbird-on-water and waterbird-on-land for
Waterbirds, non-blond female and blond male for CelebA, and the two
label-color-conflicting groups for CMNIST. For CivilComments, following the
supervision setting used in prior work
\cite{liu2021just,hanimproving,gourabathina2026robustness}, we assume access
only to the binary identity-mention indicator rather than the full set of
identity-specific group annotations used for 16-group evaluation. Accordingly,
the reference distribution is constructed using the four groups defined by
toxicity and whether any identity attribute is mentioned; specifically, we use
the non-toxic identity-mentioning group and the toxic non-identity-mentioning
group.

For the symmetric label noise setting, we instead construct the reference
distribution by including both bias-aligned and bias-conflicting examples within
each class. Specifically, within each class, we subsample the bias-aligned and
bias-conflicting examples to have matched counts, so that the reference
distribution is balanced with respect to the spurious attribute conditional on
the class label. Within the selected reference set, we use uniform empirical
weights.

\subsection{Optimal Transport Computation}
\label{sec:appendix-ot-computation}

We compute OT dual potentials using class-conditioned optimal transport. For
each observed class, we solve an entropy-regularized OT problem between the
training examples and reference examples belonging to that class. We normalize
the training and reference empirical measures separately within each class.
For a class \(k\) with \(n_k\) training examples and \(m_k\) reference examples,
we assign uniform masses \(1/n_k\) and \(1/m_k\) to the training and reference
examples, respectively.
Both marginals therefore have total mass one, satisfying the classwise
mass-matching condition in Appendix~\ref{app:class_conditioned_ot} by
construction, even when the training and reference sample counts differ.

In the pretrained feature space defined by \(\phi(\cdot)\), we use the squared
Euclidean distance as the ground cost:
\[
C_{ij}
=
\|\phi(x_i)-\phi(x_j')\|_2^2,
\]
where \(x_i\) is a training example and \(x_j'\) is a reference example.

We solve each class-conditioned entropy-regularized OT problem using the
Sinkhorn algorithm~\cite{cuturi2013sinkhorn}. For each class \(c\), the entropy
regularization strength is set to
\[
\varepsilon_c = 0.03\,\bar C_c,
\]
where \(\bar C_c\) denotes the average cost within class \(c\). Sinkhorn is run
for at most 300 iterations with convergence threshold \(10^{-6}\). After
convergence, we obtain the training-side dual potentials \(f^*\) for each
class-conditioned OT problem. We then standardize the dual potentials within
each class using median-MAD normalization to align scales across classes, and
concatenate them into a single vector over the training set.

\subsection{Sample Reweighting}

Given the class-conditioned dual potentials \(f_i^*\), we instantiate the
closed-form reweighting rule in Proposition~\ref{prop:kl_reweighting}. In our
implementation, we set
\[
\tau = \alpha \cdot \mathrm{median}(f^*),
\]
where \(\alpha\) controls the sharpness of the weighting. As noted in
Section~\ref{sec:preliminaries}, for fixed \(\tau\), the reweighting rule is invariant to the
additive-constant ambiguity of OT dual potentials: for any \(a\in\mathbb{R}\),
\[
\frac{\exp(-(f_i^*+a)/\tau)}
{\sum_{j=1}^n \exp(-(f_j^*+a)/\tau)}
=
\frac{\exp(-f_i^*/\tau)}
{\sum_{j=1}^n \exp(-f_j^*/\tau)}.
\]
Therefore, dual potentials that differ only by an additive constant lead to the
same normalized sample weights.

We then apply additional normalization steps to the resulting weights. To
prevent class imbalance from dominating the reweighted training distribution,
we normalize the weights within each class and then rescale them so that each
class has equal total mass. The resulting weights are used for weighted sampling
during training. As a practical safeguard, we clip the maximum weight to prevent
the sampling distribution from concentrating too much mass on a small number of
examples.

We tune \(\alpha\) and the clipping threshold using validation worst-group
accuracy. For Waterbirds and CelebA, we search over
\(\alpha \in \{0.1,0.2\}\) and clipping thresholds in \(\{3,5,10,20\}\). For
CMNIST and CivilComments, we search over
\(\alpha \in \{0.4,0.5,0.6,0.7\}\) and clipping thresholds in
\(\{2,3,5,10\}\).

\paragraph{Choice of class masses.}
Class-mass normalization controls the relative contribution of
each observed class to the reweighted training objective while leaving the
classifier's hypothesis class unchanged. More generally, this step can assign
a prescribed fraction \(q_k\) of the total weight to class \(k\) before
clipping, where \(q_k\geq 0\) and \(\sum_{k=1}^K q_k=1\). Our default is
\(q_k=1/K\). Alternatives include known target class proportions or the
observed training class proportions, \(q_k=n_k/n\), where \(n_k\) denotes the
number of training examples with observed label \(k\). These alternatives can
be incorporated by changing the classwise rescaling targets.

The reference mixture weights \(\rho\) specify the composition
of \(\mu_{\mathrm{ref}}\), whereas \(q\) controls the class masses in the
subsequent weight-normalization step. In our experiments, the equal-class
default is motivated by worst-group robustness when the test-time group
composition is unknown, as discussed in Appendix~\ref{app:kl_reweighting}.

\subsection{Baseline Implementation}

For the standard benchmark results in Table~\ref{tab:main_results}, we report
the performance from the original papers whenever available, so that each
baseline is evaluated under its intended experimental protocol. When a result is
not reported in the original paper, we use the reported results from
\cite{hanimproving} or \cite{jo2025mitigating}. If the result is unavailable in
these sources as well, we reproduce the method ourselves following the
experimental setting described in the corresponding paper.

For the label-noise experiments in
Table~\ref{tab:waterbirds_celeba_label_noise_results}, we directly run all
baselines under the same label-noise protocol used for POTER, following the
implementation details and hyperparameter settings described in the original
papers. All results are averaged over three random seeds.

\section{Additional Benchmark Results}
\label{sec:appendix-additional-results}

We report detailed label-noise results and extended baseline comparisons
for the benchmarks studied in the main paper. We also evaluate POTER under
stronger spurious correlation on CMNIST.

\subsection{Full Results under Subgroup-Concentrated Label Noise}
\label{sec:appendix-subgroup-noise}

Table~\ref{tab:subgroup_label_noise_results} reports the full numerical results
corresponding to Figure~\ref{fig:subgroup_concentrated_label_noise}. Label
corruption is restricted to the landbird-on-water and waterbird-on-land groups
in Waterbirds and the blond-male group in CelebA. The reported noise rate is
the fraction of labels corrupted within each affected subgroup. Across both
datasets, POTER remains substantially more robust than KNN-RAD as the
corruption rate within these minority subgroups increases.

\begin{table}[htbp]
	\centering
    \caption{Worst-group accuracy on Waterbirds and CelebA under different subgroup-concentrated label noise rates.}
	\label{tab:subgroup_label_noise_results}
	\small
	\setlength{\tabcolsep}{8pt}
	\renewcommand{\arraystretch}{1.15}
	\resizebox{\textwidth}{!}{%
	\begin{tabular}{l l cccccc}
		\specialrule{1.2pt}{0pt}{0pt}
		\multirow{2}{*}[-1.0ex]{Dataset} &
		\multirow{2}{*}[-1.0ex]{Method} &
		\multicolumn{6}{c}{\raisebox{-0.6ex}{Label Noise (\%)}} \\
		\cmidrule(lr){3-8}
		& & 10 & 20 & 30 & 40 & 50 & 60 \\
		\midrule
        
		\multirow{2}{*}{Waterbirds}
		& KNN-RAD
		& 88.1$_{\pm 1.1}$ 
		& 77.5$_{\pm 4.3}$ 
		& 72.6$_{\pm 3.8}$ 
		& 52.9$_{\pm 5.8}$
		& 17.9$_{\pm 2.2}$
		& 7.5$_{\pm 0.2}$ \\
		
		& POTER (Ours)
        & \textbf{90.0}$_{\pm 0.1}$  
        & \textbf{89.0}$_{\pm 0.4}$  
        & \textbf{86.4}$_{\pm 0.4}$
		& \textbf{84.8}$_{\pm 0.3}$
		& \textbf{82.4}$_{\pm 0.3}$
		& \textbf{78.3}$_{\pm 0.9}$ \\
		
		\specialrule{0.8pt}{2pt}{2pt}
		
		\multirow{2}{*}{CelebA}
		& KNN-RAD
		& 79.4$_{\pm 0.0}$ 
		& 76.9$_{\pm 0.8}$
		& 73.3$_{\pm 0.6}$
		& 70.4$_{\pm 0.6}$
		& 73.0$_{\pm 0.3}$
		& 63.7$_{\pm 0.3}$ \\
		
		& POTER (Ours)
        & \textbf{90.6}$_{\pm 0.5}$  
        & \textbf{88.0}$_{\pm 0.9}$  
        & \textbf{87.6}$_{\pm 1.2}$
		& \textbf{87.2}$_{\pm 1.5}$
		& \textbf{87.2}$_{\pm 1.5}$
		& \textbf{84.8}$_{\pm 1.4}$ \\
		
		\specialrule{1.2pt}{0pt}{0pt}
	\end{tabular}
}
\end{table}

\subsection{Results under Symmetric Label Noise on CivilComments}
\label{sec:appendix-civilcomments-noise}

Table~\ref{tab:civilcomments_label_noise_results} reports additional symmetric
label noise results on CivilComments. CivilComments exhibits a different pattern from Waterbirds and CelebA, as the
16-group evaluation structure is less dominated by within-class
spurious-attribute imbalance. In Waterbirds and
CelebA, each class contains both bias-aligned and bias-conflicting groups with
highly imbalanced frequencies, so robustness requires identifying rare
bias-conflicting examples within the same class. In contrast, CivilComments
groups are defined by toxicity and identity mentions, and for every identity
attribute, non-toxic examples are substantially more frequent than toxic
examples (Table~\ref{tab:civilcomments_group_counts}). Thus, the dominant
imbalance in this evaluation is closer to class imbalance between toxic and
non-toxic comments than to a within-class spurious correlation structure.

This helps explain why AFR remains relatively robust on CivilComments. AFR
upweights examples according to a loss-based weighting parameter selected on the
validation set. Under label noise, aggressively upweighting high-loss examples
can hurt validation performance because mislabeled samples also tend to incur
large losses. Consequently, validation selection favors a near-uniform weighting
regime, making the second-stage training behave similarly to ERM rather than
strongly emphasizing high-loss examples. At the same time, AFR applies class
mass matching in the last-layer retraining stage, which balances the toxic and
non-toxic classes. This additional class balancing is particularly effective on
CivilComments, where the dominant imbalance across the 16 evaluation groups is
largely driven by the scarcity of toxic examples. This contrasts with
Waterbirds and CelebA, where class balancing alone cannot recover the rare
bias-conflicting groups within each class.

Overall, POTER, KNN-RAD, and AFR all remain relatively robust on CivilComments
under symmetric label noise. POTER achieves the best performance at 0\% and
10\% noise and remains competitive at higher noise rates, while the smaller gap
among these methods reflects the different nature of the CivilComments
evaluation compared with the image benchmarks.

\begin{table}[htbp]
	\centering
        \caption{Worst-group accuracy on CivilComments under different symmetric label noise rates.}
	\label{tab:civilcomments_label_noise_results}
	\small
	\setlength{\tabcolsep}{6pt}
	\renewcommand{\arraystretch}{1.15}
	\resizebox{\textwidth}{!}{%
	\begin{tabular}{l l c c cccc}
		\specialrule{1.2pt}{0pt}{0pt}
		\multirow{2}{*}[-1.0ex]{Dataset} &
		\multirow{2}{*}[-1.0ex]{Method} &
		\multirow{2}{*}[-1.0ex]{\makecell[c]{Group Labels\\(Tr/Val)}} &
		\multirow{2}{*}[-1.0ex]{\makecell[c]{No Extra\\Training}} &
		\multicolumn{4}{c}{\raisebox{-0.6ex}{Label Noise (\%)}} \\
		\cmidrule(lr){5-8}
		& & & & 0 & 10 & 20 & 30 \\
		\midrule
		
		\multirow{4}{*}{CivilComments}
        & Group DRO
        & Tr/Val & \checkmark
        & 69.0$_{\pm 0.9}$
        & 57.6$_{\pm 3.0}$
        & 57.2$_{\pm 3.6}$
        & 50.1$_{\pm 6.8}$  \\

        & AFR
        & Val & -
        & 66.2$_{\pm 1.8}$
        & 64.6$_{\pm 2.1}$
        & 65.4$_{\pm 1.4}$
        & 63.3$_{\pm 0.4}$ \\

		\cmidrule(lr){2-8}

		& KNN-RAD
		& Val & -
		& 69.6$_{\pm 0.0}$
		& 65.0$_{\pm 1.0}$
		& 62.7$_{\pm 1.4}$
		& 62.9$_{\pm 2.8}$\\
		
		& POTER (Ours)
		& Val & \checkmark
		& 71.0$_{\pm 0.6}$
        & 68.4$_{\pm 1.4}$  
        & 64.2$_{\pm 0.9}$  
        & 63.1$_{\pm 1.7}$   \\
		
		\specialrule{1.2pt}{0pt}{0pt}
	\end{tabular}
}
\end{table}

\subsection{Extended Baseline Comparison under Symmetric Label Noise}
\label{sec:appendix-extended-baselines}

Table~\ref{tab:additional_waterbirds_celeba_label_noise_results} provides an
extended comparison under symmetric label noise on Waterbirds and CelebA,
additionally including JTT and END~\cite{oh2022improving}. END is one of the
few methods that directly addresses spurious correlations under label noise.
As the original implementation of END is not publicly available, we report END
and JTT results as given in \cite{oh2022improving}.

The results reveal distinct robustness profiles across methods. JTT degrades
severely under label noise, consistent with the fragility of error-based sample
selection. END maintains relatively stable worst-group accuracy as the noise
rate increases, but incurs a substantial clean-setting cost: its clean-setting
worst-group accuracy is 82.8\% on Waterbirds and 82.6\% on CelebA, compared
with 90.9\% and 90.4\% for POTER, respectively. END also requires full-model
retraining. KNN-RAD outperforms loss-based baselines but exhibits noticeable
degradation at higher noise rates. POTER achieves the best or near-best
worst-group accuracy across all noise levels on both datasets, without
requiring any additional retraining stage.

\begin{table}[htbp]
	\centering
\caption{Extended comparison of worst-group accuracy on Waterbirds and CelebA under symmetric label noise. \emph{Extra Training} indicates the additional retraining required after base model training: \emph{Last} for last-layer retraining, \emph{Full} for full-model retraining, and \emph{Not required} for no additional retraining.}
	\label{tab:additional_waterbirds_celeba_label_noise_results}
    \footnotesize
	\setlength{\tabcolsep}{6pt}
    \renewcommand{\arraystretch}{1.08}
	\resizebox{\textwidth}{!}{%
	\begin{tabular}{l l c c cccc}
		\specialrule{1.2pt}{0pt}{0pt}
		\multirow{2}{*}[-0.8ex]{Dataset} &
		\multirow{2}{*}[-0.8ex]{Method} &
		\multirow{2}{*}[-0.8ex]{\makecell[c]{Group Labels\\(Tr/Val)}} &
		\multirow{2}{*}[-0.8ex]{\makecell[c]{Extra\\Training}} &
		\multicolumn{4}{c}{\raisebox{-0.5ex}{Label Noise (\%)}} \\
		\cmidrule(lr){5-8}
		& & & & 0 & 10 & 20 & 30 \\
		\midrule
		
		\multirow{6}{*}{Waterbirds}
        & Group DRO
        & Tr/Val & Not required
        & 90.6$_{\pm 0.2}$
        & 72.9$_{\pm 1.6}$
        & 54.3$_{\pm 1.0}$
        & 52.2$_{\pm 3.6}$ \\

        & JTT
        & Val & Full
        & 84.6$_{\pm 3.0}$
        & 56.5$_{\pm 8.0}$
        & 6.0$_{\pm 3.0}$
        & 2.7$_{\pm 1.0}$ \\

        & AFR
        & Val & Last
        & 88.3$_{\pm 0.9}$
        & 58.7$_{\pm 0.4}$
        & 61.2$_{\pm 5.9}$
        & 52.9$_{\pm 0.0}$ \\

		\cmidrule(lr){2-8}

        & END
        & Val & Full
        & 82.8$_{\pm 1.0}$
        & 84.2$_{\pm 1.0}$
        & 83.2$_{\pm 1.0}$
        & 81.8$_{\pm 1.0}$ \\

		& KNN-RAD
		& Val & Last
		& \textbf{91.0}$_{\pm 0.1}$
		& 82.4$_{\pm 0.7}$
		& 74.7$_{\pm 1.1}$
		& 68.9$_{\pm 2.1}$ \\
		
		& POTER (Ours)
		& Val & Not required
		& \cellcolor{yellow!12}90.9$_{\pm 0.3}$ 
        & \cellcolor{yellow!12}\textbf{89.4}$_{\pm 0.4}$  
        & \cellcolor{yellow!12}\textbf{87.9}$_{\pm 1.6}$  
        & \cellcolor{yellow!12}\textbf{85.3}$_{\pm 1.4}$ \\

		\specialrule{0.8pt}{1pt}{1pt}

		\multirow{6}{*}{CelebA}
		& Group DRO
		& Tr/Val & Not required
		& 89.3$_{\pm 1.3}$
		& 66.9$_{\pm 0.3}$
		& 59.5$_{\pm 2.6}$
		& 54.8$_{\pm 1.8}$ \\

        & JTT
        & Val & Full
        & 82.2$_{\pm 2.0}$
        & 74.8$_{\pm 2.0}$
        & 24.5$_{\pm 36.0}$
        & 15.1$_{\pm 16.0}$ \\
        
        & AFR
        & Val & Last
        & 80.9$_{\pm 2.3}$
        & 69.4$_{\pm 4.8}$
        & 67.7$_{\pm 2.4}$
        & 58.8$_{\pm 6.3}$ \\

		\cmidrule(lr){2-8}

        & END
        & Val & Full
        & 82.6$_{\pm 2.0}$
        & 79.7$_{\pm 1.0}$
        & 81.1$_{\pm 2.0}$
        & 77.8$_{\pm 3.0}$ \\

		& KNN-RAD
		& Val & Last
		& 83.9$_{\pm 0.0}$
		& 80.0$_{\pm 0.9}$
		& 78.1$_{\pm 1.3}$
		& 70.2$_{\pm 2.6}$ \\
      
		& POTER (Ours)
		& Val & Not required
		& \cellcolor{yellow!12}\textbf{90.4}$_{\pm 0.5}$ 
		& \cellcolor{yellow!12}\textbf{87.6}$_{\pm 0.3}$ 
		& \cellcolor{yellow!12}\textbf{82.8}$_{\pm 1.0}$ 
		& \cellcolor{yellow!12}\textbf{78.2}$_{\pm 0.4}$ \\
		
		\specialrule{1.2pt}{0pt}{0pt}
	\end{tabular}
}
\end{table}

%%%%%%%%%%%%%%%%%%%%%%%%%%%%%%%%%%%%%%%%%%%%%%%%%%%%%%%%%%%%

\subsection{Results under Stronger Spurious Correlation on CMNIST}
\label{sec:appendix-cmnist-stronger-correlation}

We evaluate POTER as bias-conflicting training examples become increasingly
scarce, reducing their proportion from 20\% to 10\%, 5\%, and 1\%.
To isolate the effect of spurious-correlation strength, we remove label noise
throughout the dataset and keep the remaining experimental settings fixed.
Table~\ref{tab:cmnist_stronger_correlation} highlights POTER's robustness to
increasingly strong spurious correlations: it maintains 95.1\% worst-group
accuracy even when bias-conflicting examples constitute only 1\% of the
training data.

\begin{table}[htbp]
\centering
\caption{Worst-group accuracy (\%) on CMNIST under stronger spurious correlation.
Label noise is removed throughout the dataset for this experiment.}
\label{tab:cmnist_stronger_correlation}
\small
\setlength{\tabcolsep}{7pt}
\renewcommand{\arraystretch}{1.1}
\begin{tabular*}{\linewidth}{@{\extracolsep{\fill}}lcccc}
\specialrule{1.2pt}{0pt}{0pt}
Bias-conflicting training ratio & 20\% & 10\% & 5\% & 1\% \\
\midrule
POTER & \(98.4 \pm 0.2\) & \(98.0 \pm 0.2\) & \(97.3 \pm 0.3\) & \(95.1 \pm 0.2\) \\
\specialrule{1.2pt}{0pt}{0pt}
\end{tabular*}
\end{table}

\section{Empirical Analysis of POTER}
\label{sec:appendix-framework-analysis}

We examine how POTER's performance depends on validation supervision, the
feature extractor, and the choice of weighting method. We also measure the
computational overhead of weight construction.

\subsection{Sensitivity to Validation Supervision}
\label{sec:appendix-validation-robustness}

POTER uses a group-annotated validation set for reference construction and
model selection. The validation-supervised baselines in our comparison also
rely on such supervision. Nevertheless, because the reference distribution is
central to POTER's sample weighting, we assess its robustness when validation
supervision is limited or imperfect. Specifically, we vary the reference size,
reference-label noise, and availability of group annotations on Waterbirds.

\paragraph{Reduced reference size.}
We subsample the standard reference set of 599 examples to 25\% and 10\% of
its original size. As shown in Table~\ref{tab:reference_size}, POTER achieves
\(89.8 \pm 0.1\%\) worst-group accuracy with roughly 60 reference examples,
about 1\% of the combined training and validation data. This indicates that
a small group-annotated reference can provide a useful weighting signal.

\begin{table}[htbp]
\centering
\caption{Worst-group accuracy (\%) on Waterbirds when subsampling the reference
set. The full reference contains 599 examples; all other settings are unchanged.}
\label{tab:reference_size}
\small
\setlength{\tabcolsep}{10pt}
\renewcommand{\arraystretch}{1.1}
\begin{tabular*}{\linewidth}{@{\extracolsep{\fill}}lccc}
\specialrule{1.2pt}{0pt}{0pt}
Reference size & 100\% & 25\% & 10\% \\
\midrule
POTER & \(90.9 \pm 0.3\) & \(91.1 \pm 0.1\) & \(89.8 \pm 0.1\) \\
\specialrule{1.2pt}{0pt}{0pt}
\end{tabular*}
\end{table}

\paragraph{Noisy reference labels.}
We inject symmetric noise into the reference class labels at nominal rates
of 10\%, 20\%, and 30\%. Table~\ref{tab:reference_label_noise} shows a gradual
decrease in worst-group accuracy, from \(90.9 \pm 0.3\%\) with clean reference
labels to \(82.5 \pm 1.9\%\) at 30\% noise. Because the reference is
class-imbalanced, approximately 60\% of the reference examples labeled as
waterbirds are actually landbirds at 30\% nominal noise. This indicates that
POTER can retain strong worst-group performance even when most reference
examples assigned to one class are mislabeled.

\begin{table}[htbp]
\centering
\caption{Worst-group accuracy (\%) on Waterbirds under symmetric corruption of
reference class labels. At 30\% nominal noise, the observed-waterbird reference
has approximately 60\% contamination due to class imbalance.}
\label{tab:reference_label_noise}
\small
\setlength{\tabcolsep}{7pt}
\renewcommand{\arraystretch}{1.1}
\begin{tabular*}{\linewidth}{@{\extracolsep{\fill}}lcccc}
\specialrule{1.2pt}{0pt}{0pt}
Reference-label noise rate & 0\% & 10\% & 20\% & 30\% \\
\midrule
POTER & \(90.9 \pm 0.3\) & \(87.9 \pm 1.2\) & \(85.7 \pm 0.3\) & \(82.5 \pm 1.9\) \\
\specialrule{1.2pt}{0pt}{0pt}
\end{tabular*}
\end{table}

\paragraph{Without group annotations.}
We use the entire class-labeled validation set as the reference, without
group-based selection or balancing, and select models and hyperparameters
using mean validation accuracy. Group annotations are therefore unused in
reference construction, training, and model selection; test group labels
are used only to evaluate worst-group accuracy.
Table~\ref{tab:group_annotation_free} reports \(89.3 \pm 0.5\%\) worst-group
accuracy for this variant, compared with \(90.9 \pm 0.3\%\) for standard
POTER. This result likely benefits from the relatively balanced coverage of
groups in Waterbirds' validation split. This suggests that POTER can achieve
strong worst-group performance without validation group annotations when
the validation distribution is representative of the test distribution.

\begin{table}[htbp]
\centering
\caption{POTER with and without validation group annotations on Waterbirds.}
\label{tab:group_annotation_free}
\small
\setlength{\tabcolsep}{4pt}
\renewcommand{\arraystretch}{1.1}
\begin{tabular*}{\linewidth}{@{\extracolsep{\fill}}lllc}
\specialrule{1.2pt}{0pt}{0pt}
Variant & Reference & Selection metric & Worst-group (\%) \\
\midrule
Standard POTER & Group-selected subset & Worst-group accuracy & \(90.9 \pm 0.3\) \\
Without group annotations & Entire validation set & Mean accuracy & \(89.3 \pm 0.5\) \\
\specialrule{1.2pt}{0pt}{0pt}
\end{tabular*}
\end{table}

\subsection{Sensitivity to the Feature Extractor}
\label{sec:appendix-feature-sensitivity}

We vary the feature extractor used to compute the sample weights on Waterbirds,
keeping the remaining pipeline fixed and performing no extractor-specific
retuning. In addition to the default ImageNet-pretrained ResNet-50, we consider
a ResNet-50 trained by ERM on Waterbirds for 300 epochs and SigLIP~2 ViT-B/16,
which uses a different architecture and pretraining paradigm.

Table~\ref{tab:feature_extractor_sensitivity} reports mean sample weights within
each group and classification accuracy. Across all three extractors, each
bias-conflicting group receives a larger mean weight than its bias-aligned
counterpart within the same class: \(g_2\) versus \(g_1\), and \(g_3\) versus
\(g_4\). ERM-trained features yield worst-group accuracy comparable to the
default features (\(91.0\%\) versus \(90.9\%\)). SigLIP~2 also achieves strong
worst-group accuracy (\(87.9\%\)) and the highest average accuracy (\(95.2\%\)).
These results suggest that POTER is robust to the choice of feature extractor,
with consistent weighting behavior and strong performance across the tested
representations without extractor-specific retuning.

\begin{table}[htbp]
\centering
\caption{Feature-extractor sensitivity on Waterbirds without extractor-specific
retuning. Columns \(g_1\)--\(g_4\) report the mean sample weight for each group
defined in Appendix~\ref{sec:appendix-dataset}; \(g_2\) and \(g_3\) are
bias-conflicting groups.}
\label{tab:feature_extractor_sensitivity}
\small
\setlength{\tabcolsep}{4pt}
\renewcommand{\arraystretch}{1.1}
\begin{tabular*}{\linewidth}{@{\extracolsep{\fill}}lcccccc}
\specialrule{1.2pt}{0pt}{0pt}
Extractor & \(g_1\) & \(g_2\) & \(g_3\) & \(g_4\) & Worst-group (\%) & Average (\%) \\
\midrule
Pretrained ResNet-50 & 0.31 & 6.32 & 10.19 & 1.87 & \(90.9 \pm 0.3\) & \(92.7 \pm 0.1\) \\
ERM-trained ResNet-50 & 0.37 & 7.79 & 12.43 & 1.30 & \(91.0 \pm 0.1\) & \(91.3 \pm 0.0\) \\
SigLIP~2 ViT-B/16 & 0.25 & 6.89 & 11.24 & 1.92 & \(87.9 \pm 0.1\) & \(95.2 \pm 0.2\) \\
\specialrule{1.2pt}{0pt}{0pt}
\end{tabular*}
\end{table}

\subsection{Comparison with Distance-Based Weighting}
\label{sec:appendix-distance-weighting}

To isolate the contribution of OT, we replace its dual potentials with
class-conditioned \(k\)NN distances to the same reference set on Waterbirds.
We keep the feature representation, reference set, score normalization,
downstream training procedure, and validation-based tuning protocol fixed.
For the distance-based alternative, we jointly search
\(k \in \{1,5,10,20\}\) and a weight-transformation grid that includes and
extends the POTER grid. The selected configuration uses \(k=5\).

Table~\ref{tab:distance_based_weighting} shows that both methods assign higher
mean weights to the minority group within each class, suggesting that the
shared feature representation contains relevant subgroup information.
However, OT produces
substantially larger differences in mean weights between minority and majority
groups, indicating stronger relative emphasis on bias-conflicting samples.
POTER achieves \(90.9 \pm 0.3\%\) worst-group accuracy versus
\(76.7 \pm 0.6\%\) for \(k\)NN, a difference of 14.2 percentage points.
This controlled comparison suggests that reference proximity alone is
insufficient to adequately mitigate spurious correlations in this setting.
This limitation arises because \(k\)NN computes reference distances
independently for each training sample, allowing many samples to receive high
weights from the same small reference neighborhood without accounting for
local overrepresentation. For example, a densely populated training region
can remain close to a few reference points even when its share of training
mass greatly exceeds the reference mass in that region.
In contrast, OT's fixed marginal constraints make
these samples compete for limited reference mass, so its dual potentials
reflect regional excess or deficit of training mass.

\begin{table}[htbp]
\centering
\caption{Comparison of OT-based weighting and class-conditioned \(k\)NN distance weighting on Waterbirds.}
\label{tab:distance_based_weighting}
\small
\setlength{\tabcolsep}{6pt}
\renewcommand{\arraystretch}{1.1}
\begin{tabular*}{\linewidth}{@{\extracolsep{\fill}}lccccc}
\specialrule{1.2pt}{0pt}{0pt}
Weighting method & \(g_1\) & \(g_2\) & \(g_3\) & \(g_4\) & Worst-group (\%) \\
\midrule
Class-conditioned \(k\)NN weighting & 0.66 & 1.24 & 3.84 & 1.94 & \(76.7 \pm 0.6\) \\
OT-based weighting (POTER) & 0.31 & 6.32 & 10.19 & 1.87 & \(90.9 \pm 0.3\) \\
\specialrule{1.2pt}{0pt}{0pt}
\end{tabular*}
\end{table}

\subsection{Runtime Analysis and Comparison with SELF}
\label{sec:appendix-computational-overhead}

\paragraph{Runtime for computing OT dual potentials.}
Table~\ref{tab:ot_runtime} reports the wall-clock time required to compute the
OT dual potentials \(f^*\) after feature embeddings have been extracted.
Feature extraction and model training are excluded from these measurements.
The OT computation takes at most \(4.52\) seconds on average across the
evaluated datasets.

In our implementation, the downstream reweighting objective depends only on the
training-side dual potentials, not on the full transport plan. We therefore do
not store the full transport plan and retain only the Sinkhorn scaling variables
needed to compute \(f^*\). This reduces memory usage and makes the OT step
practical even for larger datasets.

\begin{table}[htbp]
\centering
\caption{Runtime for computing OT dual potentials \(f^*\) after feature embeddings
have been extracted. Results are averaged over five runs.}
\label{tab:ot_runtime}
\small
\setlength{\tabcolsep}{8pt}
\renewcommand{\arraystretch}{1.1}
\begin{tabular*}{\linewidth}{@{\extracolsep{\fill}}lrrrc}
\specialrule{1.2pt}{0pt}{0pt}
Dataset & \(n_{\mathrm{train}}\) & \(n_{\mathrm{ref}}\) & Dimension & Time (s) \\
\midrule
CMNIST        & 30{,}000  & 5{,}022 & 2{,}048 & \(0.41 \pm 0.00\) \\
Waterbirds    & 4{,}795   & 599   & 2{,}048 & \(0.07 \pm 0.00\) \\
CelebA        & 162{,}770 & 8{,}717 & 2{,}048 & \(4.52 \pm 0.03\) \\
CivilComments & 269{,}038 & 7{,}111 & 768 & \(4.31 \pm 0.00\) \\
\specialrule{1.2pt}{0pt}{0pt}
\end{tabular*}
\end{table}

\paragraph{Comparison with SELF.}
To assess POTER's computational overhead relative to last-layer retraining,
we compare it with SELF~\cite{labonte2024towards} on Waterbirds.
POTER requires a single model-training stage: reference construction, OT
computation, and weight transformation involve no additional model training.
Both pipelines include feature extraction and full-model training, so we
compare their additional method-specific operations.
Table~\ref{tab:self_overhead} reports the runtime of POTER's complete weight
construction and SELF's last-layer retraining. POTER's complete weight
construction takes 0.075\,s, while SELF's last-layer retraining takes 5.657\,s.
Although both overheads are small relative to full-model training, POTER
requires less additional computation and retains \(87.9 \pm 1.6\%\)
worst-group accuracy under 20\% symmetric label noise, compared with
\(14.7 \pm 0.2\%\) for SELF. This contrast highlights the fragility of
loss-based sample selection under label noise and the robustness of
POTER's loss-independent reweighting.

\begin{table}[htbp]
\centering
\caption{Comparison of POTER and SELF on Waterbirds: additional computation
and robustness to label noise. Worst-group accuracy is measured under 20\% symmetric label noise.}
\label{tab:self_overhead}
\small
\setlength{\tabcolsep}{5pt}
\renewcommand{\arraystretch}{1.1}
\begin{tabular*}{\linewidth}{@{\extracolsep{\fill}}llrc}
\specialrule{1.2pt}{0pt}{0pt}
Method & Timed operation & Time (s) & Worst-group (\%) \\
\midrule
SELF & Last-layer retraining & 5.657 & \(14.7 \pm 0.2\) \\
POTER & Complete weight construction & 0.075 & \(87.9 \pm 1.6\) \\
\specialrule{1.2pt}{0pt}{0pt}
\end{tabular*}
\end{table}

\section{Relationship to Prior Work}
\label{sec:appendix-prior-work}

\begingroup
\paragraph{Relationship to OT-based data valuation.}
POTER builds on the sensitivity interpretation of OT dual potentials used in
prior work such as LAVA~\cite{just2023lava}: the training-side potential
describes how the transport discrepancy changes with the mass assigned to
an individual training sample. LAVA uses scores derived from these potentials
to value training samples relative to a validation distribution, with
applications including the detection of noisy, mislabeled, and poisoned
samples. POTER adopts this interpretation and constructs a reference
distribution from group-annotated validation data to encode a desired group
composition, with greater emphasis on bias-conflicting groups. Under fixed
transport marginals, samples in an overrepresented training region compete
for limited reference mass. The class-conditioned potentials can therefore be
interpreted as reflecting local excess or deficit of training mass relative to
this deliberately shifted reference. In this view, even a correctly labeled
sample can receive a lower weight when its region is overrepresented relative
to the reference.
We convert these
potentials into sample weights through the KL-regularized reweighting rule in
Proposition~\ref{prop:kl_reweighting} and use the resulting weights for model
training. This formulation allows the same weighting rule to emphasize
bias-conflicting samples and downweight mislabeled samples that are
misaligned with the reference for their observed class, as illustrated in
Figure~\ref{fig:waterbird_analysis}. The central contribution is to develop
this view of sample importance into a loss-independent reweighting framework
for supervised learning: class-conditioned OT dual potentials provide
sample-level signals of relative mass concentration and determine each
sample's contribution to the training objective. This directly connects the
allocation of training mass to robust learning under spurious correlations
and label noise.

\paragraph{Relationship to reweighting methods for robust learning.}
Group DRO~\cite{sagawa2019distributionally} optimizes group losses using a
shared weighting coefficient for samples within each group. Methods such as
SSA~\cite{nam2022spread} and GIC~\cite{hanimproving} first infer group labels
and then use these labels in a group-robust training procedure. When combined
with Group DRO, their weighting likewise operates at the group level and
does not distinguish individual samples within a group. POTER retains
sample-level variation within each group, as illustrated in
Figure~\ref{fig:waterbird_analysis} and the accompanying analysis of the sample
weights. A different family of
methods uses a preliminary model to determine which samples to emphasize:
AFR~\cite{qiu2023simple} and LEIA~\cite{gourabathina2026robustness} assign
greater importance to examples on which the model performs poorly, while
SELF~\cite{labonte2024towards} selects examples using misclassification or
disagreement between models. Criteria based on high loss or misclassification
may confound bias-conflicting examples with mislabeled ones. The performance
degradation under label noise in
Tables~\ref{tab:waterbirds_celeba_label_noise_results},
\ref{tab:additional_waterbirds_celeba_label_noise_results}, and
\ref{tab:self_overhead} is consistent with this limitation.
POTER instead derives individual sample weights from class-conditioned
reference alignment, without requiring training-set group annotations or
losses from a preliminary classifier. It thus combines sample-level
differentiation with loss-independent weighting for robust learning.
\par
\endgroup

\end{document}